\newtheorem{theorem}{Theorem}[section]
\newtheorem{lemma}[theorem]{Lemma}
\newtheorem{assumption}[theorem]{Assumption}
\newtheorem{proposition}[theorem]{Proposition}
\newtheorem{corollary}[theorem]{Corollary}
\newcommand{\repeattheorem}[1]{%
  \begingroup
  \renewcommand{\thetheorem}{\ref{#1}}%
  \expandafter\expandafter\expandafter\theorem
  \csname reptheorem@#1\endcsname
  \endtheorem
  \endgroup
}
\xdef\csname reptheorem@#1\endcsname{%
    \unexpanded\expandafter{\BODY}%
  }%
\unskip\label{#1}\endtheorem
\newcommand{\repeatlemma}[1]{%
  \begingroup
  \renewcommand{\thetheorem}{\ref{#1}}%
  \expandafter\expandafter\expandafter\lemma
  \csname reptheorem@#1\endcsname
  \endtheorem
  \endgroup
}
\xdef\csname reptheorem@#1\endcsname{%
    \unexpanded\expandafter{\BODY}%
  }%
\unskip\label{#1}\endtheorem
\newcommand{\repeatcorollary}[1]{%
  \begingroup
  \renewcommand{\thetheorem}{\ref{#1}}%
  \expandafter\expandafter\expandafter\corollary
  \csname reptheorem@#1\endcsname
  \endtheorem
  \endgroup
}
\xdef\csname reptheorem@#1\endcsname{%
    \unexpanded\expandafter{\BODY}%
  }%
\unskip\label{#1}\endtheorem
\newcommand{\repeatproposition}[1]{%
  \begingroup
  \renewcommand{\thetheorem}{\ref{#1}}%
  \expandafter\expandafter\expandafter\proposition
  \csname reptheorem@#1\endcsname
  \endtheorem
  \endgroup
}
\xdef\csname reptheorem@#1\endcsname{%
    \unexpanded\expandafter{\BODY}%
  }%
\unskip\label{#1}\endtheorem
\newcommand{\R}{\mathbb{R}}
\newcommand{\N}{\mathbb{N}}
\newcommand{\E}{\mathbb{E}}
\newcommand{\Hh}{\mathbb{H}}
\newcommand{\Pp}{\mathbb{P}}
\newcommand{\oneproj}{\Pcal_W}
\newcommand{\Scal}{\mathcal{S}}
\newcommand{\Acal}{\mathcal{A}}
\newcommand{\SAcal}{\Scal\times\Acal}
\newcommand{\Dcal}{\mathcal{D}}
\newcommand{\Hcal}{\mathcal{H}}
\newcommand{\Rcal}{\mathcal{R}}
\newcommand{\Ocal}{\mathcal{O}}
\newcommand{\Pcal}{\mathcal{P}}
\newcommand{\Xcal}{\mathcal{X}}
\newcommand{\Ycal}{\mathcal{Y}}
\newcommand{\abs}[1]{\left| {#1} \right|}
\newcommand{\bc}[1]{\left\{{#1}\right\}}
\newcommand{\br}[1]{\left({#1}\right)}
\newcommand{\bs}[1]{\left[{#1}\right]}
\newcommand\ip[2]{\langle #1, #2 \rangle}
\newcommand\kl[2]{\operatorname{KL}\br{ #1 || #2 }}
\newcommand{\norm}[1]{\left\lVert#1\right\rVert}
\title{Convergence of a model-free entropy-regularized \\inverse reinforcement learning algorithm}
\author{%
  Titouan Renard\thanks{The first three authors contributed equally. Andreas Schlaginhaufen, Tingting Ni, and Maryam Kamgarpour are with the SYCAMORE lab at École Polytechnique Fedérale de Lausanne (EPFL), Switzerland. Correspondence to:
 \{andreas.schlaginhaufen, tingting.ni\}@epfl.ch}
   \And
Andreas Schlaginhaufen\footnote[1]{}\hspace{0.15cm}\thanks{Andreas Schlaginhaufen was supported by the Swiss Data Science Center.} \\
   \And
  Tingting Ni\footnote[1]{}\hspace{0.15cm}\thanks{Tingting Ni was supported by the Swiss National Science Foundation.} \\
  % examples of more authors
   \And
   Maryam Kamgarpour \\
}
\begin{document}
\maketitle
\begin{abstract}
    Given a dataset of expert demonstrations, inverse reinforcement learning (IRL) aims to recover a reward for which the expert is optimal. This work proposes a model-free algorithm to solve the entropy-regularized IRL problem. In particular, we employ a stochastic gradient descent update for the reward and a stochastic soft policy iteration update for the policy. Assuming access to a generative model, we prove that our algorithm is guaranteed to recover a reward for which the expert is $\varepsilon$-optimal using an expected
    number of $\mathcal{O}(1/\varepsilon^{2})$ samples of the Markov decision process (MDP). Furthermore, with an expected
    number of $\mathcal{O}(1/\varepsilon^{4})$ samples we prove that the optimal policy corresponding to the recovered reward is $\varepsilon$-close to the expert policy in total variation distance. 
\end{abstract}
\section{Introduction}
The problem of inverse reinforcement learning (IRL) can be informally characterized as follows. Given observations of an expert acting optimally with respect to an unknown reward in a Markov decision process (MDP), we aim to recover a reward for which the expert is optimal. While early inquiries can be traced back to optimal control theory \cite{Kalman1964} and to econometrics \cite{Rust1992}, IRL was first introduced to the machine learning community by Russell \cite{Russell1998}. The motivation behind IRL is two-fold: First, IRL is a powerful tool for imitation learning, where the goal is to recover the expert's policy from a dataset of expert demonstrations. This is particularly useful in scenarios such as autonomous driving \cite{pomerleau1988alvinn} where it is easy to collect expert demonstrations. Second, compared to other imitation learning methods that only recover a policy, IRL comes with the advantage that recovering a reward provides a more interpretable and transferable description of the task, as the reward can be potentially used to learn optimal policies in a new environment.

A powerful family of imitation learning and IRL algorithms are based on a min-max game between a policy and a reward player \cite{Syed2007}. The policy player tries to maximize the expected reward, while the reward player tries to minimize the suboptimality of the expert demonstrations. At a saddle point, we recover a reward for which the expert is nearly optimal along with a policy that approximates the expert's. Hence, this game-theoretic approach is useful for both imitation learning, by recovering the expert's policy, and for IRL, by recovering a reward that rationalizes the expert's behavior. However, when applied to IRL, the non-uniqueness of the optimal policy leads to trivial solutions, like a uniform reward for which all policies are optimal. This degeneracy is usually addressed through an entropy regularization term, which guarantees the uniqueness of the optimal policy, leading to the widely used entropy-regularized IRL framework \cite{ziebart2010modeling, Ho2016}.

Although approaches based on the aforementioned min-max game have shown tremendous success in imitation learning and IRL applications \cite{Ho2016, fu2017learning}, only little is known about their convergence properties. From an optimization perspective, the min-max problem introduced by \cite{Syed2007} is challenging for two reasons. First, since the policy player aims to maximize its expected reward in an MDP, the objective is inherently non-concave in the policy \cite{Agarwal2020}. Second, as in practice we are often only given sample-based access to the MDP, either via a simulator or by collecting samples in the real world, we need to resort to stochastic optimization techniques to build a \emph{model-free} algorithm. 

\subsection{Related work}
Most previous work on the min-max approach to imitation learning and IRL considers the unregularized setting and linear reward classes. For the infinite-horizon unregularized case, the authors of \cite{Syed2007} propose a multiplicative weights algorithm to update the reward and show that their algorithm provably recovers a policy that is approximately optimal under the expert's true reward. However, they require access to an approximate solution of the forward MDP problem at each update step of the reward. This results in an inner RL loop, which may lead to high per-iteration costs in practice. For infinite-horizon linear MDPs, the authors of \cite{viano2022proximal} propose a single-loop proximal point algorithm based on the so-called Q-LP formulation of the forward problem \cite{bas2021logistic}. They show that the policy learned by their algorithm efficiently convergences to the expert's, where the convergence is measured in a so-called integral probability metric \cite{muller1997integral} between the state-action occupancy measures.

For the finite-horizon unregularized case, the authors of \cite{Shani2021} analyze a single-loop mirror descent-ascent algorithm and show their algorithm achieves sublinear regret measured in terms of the worst-case difference between the accumulated values of the learner and the expert. Furthermore, the authors of \cite{Liu2022} extend this approach to linear MDPs with function approximation. It can be shown that the regret bounds established by \cite{Shani2021} and \cite{Liu2022} imply convergence of a randomized policy derived from the algorithm iterates in the aforementioned integral probability metric.

Note that all of the above works consider the unregularized setting and their guarantees are on the recovered policy rather than on the reward. To the best of our knowledge, \cite{Zeng2022} is the only work to provide guarantees for the infinite-horizon entropy-regularized setting. Their method uses an exact soft-policy iteration step to update the policy and stochastic gradient descent to update the reward. They show that the proposed algorithm reaches an $\varepsilon$-stationary solution in $\Ocal(1/\varepsilon^2)$ iterations. However, they do not provide guarantees on the suboptimality of the expert with respect to the recovered reward. Moreover, they require access to the exact state-action value function and an infinitely long trajectory to estimate the reward gradient, which makes the algorithm unimplementable in a model-free setting.

% \begin{table*}[t]
%     \centering
%         \begin{tabular}{||c  | c | c | c | c | c ||} 
%          \hline
%          \textbf{Work}  &  \textbf{Inner Loop} & \textbf{Regularization} &  \textbf{Horizon} & \textbf{Iterations} & \textbf{Metric}  \\ [0.5ex] 
%          \hline\hline
%          \cite{Syed2007}  & Yes & No & $\infty$ & $O(1/\varepsilon^2)$  & in policy \\ 
%          \hline
%           \cite{Shani2021}  & No & No & Finite & $O(1/\varepsilon^2)$ & in policy \\ 
%          \hline
%          \cite{Liu2022}  & No & No & Finite &$O(1/\varepsilon^2)$  & in policy \\ % 
%          \hline
%          \cite{Zeng2022} & No & Entropy &  $\infty$ & $O(1/\varepsilon^2)$ & to stationarity \\ 
%          \hline
%         \textit{This work.} & No & Entropy & $\infty$ & $O(1/\varepsilon^2)$ & in reward\\
%         \hline
%         \end{tabular}
%     \vspace{1mm}
%     \caption{Comparison of previous related results.}
%     \label{table:known_results}
% \end{table*}

\subsection{Contributions}
We propose a model-free single-loop entropy-regularized IRL algorithm with a stochastic projected gradient descent update for the reward and a stochastic soft policy iteration  \cite{Haarnoja2018} update for the policy. Assuming access to a generative model, we show that our algorithm is guaranteed to recover a reward for which the expert is $\varepsilon$-optimal using an expected number of  $\Ocal(1/\varepsilon^2)$ samples from the MDP. Furthermore, we prove that with an expected number of $\Ocal(1/\varepsilon^4)$ samples, the optimal policy corresponding to the recovered reward is $\varepsilon$-close to the expert policy in total variation distance. To the best of our knowledge, this is the first work to provide end-to-end guarantees on the rewards for a model-free single-loop entropy-regularized IRL algorithm.

\section{Background}
\subsection{Notation}
We use $\R$ and $\N$ to denote the set of real and natural numbers, respectively. For a vector $x$ in $\R^d$, we denote its $p$-norm by $\norm{x}_p$ and its projection onto a closed convex set $\Xcal\subset \R^d$ by \(\mathcal{P}_{\mathcal{X}}(x) = \arg\min_{y\in\mathcal{X}} \|x-y\|_2\). For any two vectors $x,y\in\R^d$, we denote the standard inner product by $\langle x,y\rangle$. Furthermore, we let $\Ycal^{\Xcal}$ be the set of all functions mapping from the set $\Xcal$ to $\Ycal$. Given a finite set $\Xcal$ the infinity-norm of a vector valued function \(f:\Xcal\to\mathbb{R}^{d}\) over $\Xcal$ is defined as \(\|f\|_{\infty} :=\max_{x\in\Xcal}\norm{f(x)}_\infty\). Moreover, we denote the probability simplex over \(\Xcal\) by \(\Delta_{\Xcal}\). For $p\in\Delta_{\Xcal}$, we denote the Shannon entropy of $p$ by $\Hcal(p):= -\sum_{x\in\Xcal} p(x)\log p(x)$. Finally, a random variable $X$ taking values in $\N$ is geometrically distributed with parameter $p\in(0,1)$, denoted as $X\sim \operatorname{Geom}(p)$, if \(\Pr(X=k)=(1-p)^{k}p\).

\subsection{Markov decision processes}
Throughout this paper, we consider an entropy-regularized MDP defined by the tuple $(\mathcal{S}, \mathcal{A}, P, \nu_0, r, \gamma, \tau)$. Here, $\mathcal{S}$ and $\mathcal{A}$ denote finite state and action spaces, $P\in \Delta^{\SAcal}_{\Scal}$ is a Markovian transition kernel, and $\nu_0\in\Delta_{\Scal}$ is the initial state distribution. Moreover, $r\in\R^{\SAcal}$ is a reward function, $\gamma\in (0,1)$ a discount factor, and $\tau>0$ is the regularization parameter. Starting from $s_0\sim \nu_0$, the agent chooses at each step in time, $h$, an action $a_h\in\Acal$, receives reward $r(s_h, a_h)$, and arrives in state $s_{h+1}\sim P(\cdot|s_h, a_h)$. Given a stationary Markov policy $\pi\in\Delta_{\Acal}^{\Scal}$, the agent selects at each state $s_h$ its next action $a_h$ by sampling from $\pi(\cdot|s_h)$. We use $\Pp^{\pi}_{\nu_0}$ to denote the distribution over the sample space $(\Scal\times\Acal)^\infty = \bc{(s_0, a_0, s_1, a_1, \hdots): s_h\in\Scal, a_h\in\Acal,\; h\in\N}$ induced by the policy $\pi$ and the initial distribution $\nu_0$. Moreover, we let $\E_\pi$ be the expectation with respect to $\Pp^{\pi}_{\nu_0}$.

The goal of the \emph{forward} MDP problem is to find a policy $\pi\in\Delta_{\Acal}^{\Scal}$ maximizing the entropy-regularized expected discounted reward
\begin{align}\label{eq:mdp_objective}\tag{O-RL}
    J^\pi_r
        := \E_\pi \bs{\sum_{h=0}^{\infty} \gamma^h r(s_h,a_h)} + \tau \Hh(\pi),
\end{align}
where $\Hh(\pi):= \E_\pi \bs{\sum_{h=0}^\infty \gamma^h \Hcal\br{\pi(\cdot|s_h)}}$ is the  discounted causal entropy of $\pi$ \cite{ziebart2010modeling}. For a fixed reward $r$, we denote the optimal policy as
\begin{align}
    \pi^*_r := \arg \max_{\pi \in \Delta_{\Acal}^{\Scal}} J^\pi_r, \label{eq:mdp_optimality_formulation}
\end{align}
\noindent and the optimal objective value as $J_r^*:=J_r^{\pi^*_r}$. Note that the entropy regularization ensures that $\pi^*_r$ is unique \cite{geist2019theory}. To help characterize expectations over trajectories, we introduce the state occupancy measure $\nu^\pi\in\Delta_{\Scal}$ defined by
\begin{equation}
    \nu^\pi(s) := (1-\gamma) \sum_{h=0}^\infty \gamma^h \Pp_{\nu_0}^\pi(s_h = s).
\end{equation}
\noindent For a function $f\in\R^{\Scal}$, this allows us to rewrite the expectation $\E_\pi\bs{\sum_{h=0}^\infty \gamma^h f(s)}$ as $\E_{s\sim\nu^\pi}\bs{f(s)}/(1-\gamma)$ \cite{Puterman1994}.

\subsection{Problem statement}
The IRL problem is specified as follows: given access to a dataset of expert trajectories,
\begin{equation}
    \Dcal^E = \bc{\br{s_0^i,a_0^i, \hdots, s_{H-1}^i, a_{H-1}^i}}_{i=1}^{N},
\end{equation}
\noindent sampled from an unknown expert policy $\pi^E$, we aim to recover a reward $\bar{r}$, in some reward class $\Rcal\subseteq \R^{\SAcal}$, for which the expert is optimal. However, this is problematic for two reasons. First, the expert policy $\pi^E$ may not be optimal for any reward in $\Rcal$. Second, we only have access to $\Dcal^E$ rather than to $\pi^E$ itself. To address the first issue, we relax our goal to recovering a reward minimizing the suboptimality of the expert. This leads us to the min-max formulation
\begin{equation}\label{eq:idealized_irl}
    \min_{r\in\Rcal} \max_{\pi\in\Delta_{\Acal}^{\Scal}}J^{\pi}_r - J^{\pi^E}_r.
\end{equation}
We can interpret \eqref{eq:idealized_irl} as a zero-sum game between a policy player that aims to maximize the MDP objective \eqref{eq:mdp_objective} and a reward player that aims to minimize the suboptimality of the expert policy. If the expert is optimal for some reward in $\Rcal$, then the set of minimizers in \eqref{eq:idealized_irl} coincides with the set of rewards for which the expert is optimal \cite{Schlaginhaufen2023}.

To address the second issue, we need to estimate $J^{\pi^E}_r$ from the expert data set $\Dcal^E$. To this end, we consider the bounded linear reward class
\begin{equation}
    \Rcal = \bc{r_w := \ip{w}{\phi(\cdot,\cdot)} : \phi:\SAcal\to\R^k, w\in W},
\end{equation}
where $w$ is a weight vector in $W:= \bc{w\in\R^k:\norm{w}_1\leq 1}$ and $\phi(s,a)$ is a feature vector in $\R^k$. The features can either be task-specific, such as distance to target and speed in a driving task, or general, allowing for all state-only or state-action rewards, resulting in $k=|\Scal|$ or $k=|\Scal||\Acal|$, respectively. Introducing the feature expectation
\begin{equation}\label{featureexpectation}
    \sigma^\pi := \E_\pi\bs{\sum_{h=0}^\infty \gamma^h \phi(s_h, a_h)},
\end{equation}
associated with the policy $\pi$, and the empirical expert feature expectation 
\begin{equation}\label{expertfeature}
    \hat{\sigma}^E := \frac{1}{N}\sum_{i=1}^{N}\sum_{h=0}^{H-1} \gamma^h \phi(s_h^i, a_h^i),
\end{equation}
we can rewrite $J_{r_w}^\pi = \ip{w}{\sigma^\pi} + \tau\Hh(\pi)$ and estimate the expected reward of the expert policy as $\E_{\pi^E}\bs{\sum_{h=0}^\infty \gamma^t r_w(s_h, a_h)} \approx \ip{w}{\hat{\sigma}^E}$. Plugging this back into \eqref{eq:idealized_irl} leads us to the IRL problem
\begin{align}\label{eq:irl_problem}\tag{O-IRL}
    \min_{w\in W} \max_{\pi\in \Delta_{\Acal}^{\Scal}} \underbrace{\ip{w}{\sigma^\pi - \hat{\sigma}^E} + \tau \Hh(\pi)-\tau \Hh(\pi^E)}_{=:L(\pi, w)},\hspace{0.8cm}
\end{align}

\noindent where compared to \eqref{eq:idealized_irl} we replaced $\ip{w}{\sigma^{\pi^E}}$ with its empirical estimate $\ip{w}{\hat \sigma^E}$. The unknown term $\Hh(\pi^E)$ is constant in both parameters $w$ and $\pi$ and thus, is irrelevant for optimization. Note that for a fixed $w$, the maximizer of $L(\cdot,w)$ is the optimal policy $\pi_{r_w}^*$ defined in \eqref{eq:mdp_optimality_formulation}. Hence, the inner maximization in \eqref{eq:irl_problem} is equivalent to the forward MDP problem \eqref{eq:mdp_objective}. Our goal is to find $\bar{w}$ minimizing \eqref{eq:irl_problem}. An approximate expert policy can then be recovered from $\bar w$ by solving for the optimal policy corresponding to $r_{\bar w}$.
\section{Proposed algorithm}
To solve the problem described in \eqref{eq:irl_problem} with exact gradient information, we employ soft policy iteration for policy updates and projected gradient descent for reward updates. As detailed below, this can be viewed as a simple gradient descent-ascent type algorithm.
\paragraph{Policy update} For a known transition law $P$, we can maximize the objective in \eqref{eq:mdp_objective} via regularized dynamic programming \cite{geist2019theory}. In particular, for a fixed reward, we update the policy as
\begin{equation}\label{eq:updatepolicy}
    \pi_r^{t+1}(\cdot|s)\propto \exp\br{Q_r^{\pi^t}(s,\cdot)/\tau},
\end{equation}
with the state-action value function 
\begin{align}
   & Q_r^{\pi^t}(s,a) := r(s, a) +\E_{\pi}\bs{\sum_{h=1}^\infty \gamma^h \br{r(s_h, a_h) +\tau \Hcal(\pi(\cdot|s_h))}|s_0=s,a_0=a}.\label{Q}
\end{align}   
\noindent The above policy update is known as soft policy iteration, which converges linearly to the optimal policy \cite{geist2019theory}. Additionally, it can be shown that the soft policy iteration update coincides with entropy-regularized natural policy gradient with a specific stepsize \cite{Cen2022}. Hence, the policy update \eqref{eq:updatepolicy} can be interpreted as a gradient ascent update. However, note that computing the update \eqref{eq:updatepolicy} requires access to the state-action value function $Q_r^{\pi^t}(s,a)$ for all state-action pairs $(s,a)$.

\paragraph{Reward update} To find the optimal reward that minimizes the objective in \eqref{eq:irl_problem}, we update the reward parameter $w$ using a projected gradient descent update
 \begin{align}\label{eq:updatereward}
    w^{t+1}&=\oneproj\br{w^t - \eta_w \frac{\partial L(\pi^t,w)}{\partial w}\Big|_{w=w^t} }=\oneproj \br{ w^{t} - \eta_w \br{\sigma^{\pi^t}-\hat \sigma^{E} }},  
\end{align}
where $\oneproj$ represents the orthogonal projection onto $W$, and $\eta_w$ is the reward learning rate.

\subsection{Sampling scheme}
% As shown by \cite{Zeng2022}, an algorithm that iteratively updates the policy with soft policy iteration and the reward with performs soft policy iteration using exact state-action value \eqref{Q} and updates the policy with stochastic projected gradient descent converges to a stationary point of the objective in \eqref{eq:irl_problem}. However, in practice, we typically do not have direct access to the transition law $P$, so we need to estimate the state-action value \eqref{Q}. There are various sampling strategies to obtain biased or unbiased estimators of the state-action value. In this paper, we employ a geometric sampling scheme to obtain unbiased estimates of the state-action value function \eqref{Q} and the feature expectation \eqref{featureexpectation} to estimate the gradient information, as described below.
Both the policy update \eqref{eq:updatepolicy} and the reward update \eqref{eq:updatereward} require access to the transition law $P$ for evaluating the state-action value function \eqref{Q} and the feature expectation~\eqref{featureexpectation}, respectively. Hence, to devise a model-free algorithm, we need to estimate the state-action value and the feature expectation from samples of the MDP. To this end, we adopt the geometric sampling scheme, described below, which enables us to get unbiased estimates of the state-action value and the feature expectation.

\paragraph{State-action value estimation} We assume access to a generative model of the MDP, allowing us to obtain multiple independent trajectories starting from any arbitrary state-action pair with any policy. Given a policy $\pi$, for each state-action pair $(s,a)$, we construct an unbiased estimate of the state-action value $Q_{r}^{\pi}(s,a)$ by sampling $B$ independent trajectories $\tau_i := (s, a, \{s_h^i, a_h^i\}_{h=0}^{H_i})_{i=1}^{B}$ from $\pi$, with horizon $H_i\sim  \operatorname{Geom}(1-\gamma)$. As shown in \cite[Assumption 6.3]{Agarwal2020}, the estimator
\begin{align*}
    &\hat{Q}^{\pi}_r(s,a):=r(s,a)+\frac{1}{B}\sum_{i=1}^{B}\sum_{h=0}^{H_i}\br{r(s_h^i,a_h^i)+\tau \Hcal(\pi(\cdot|s_h^i))}.
\end{align*}
is an unbiased estimator of ${Q}^{\pi}_r(s,a)$. In the following, we will denote the above sampling procedure outputting $\hat{Q}^{\pi}_r(s,a)$ as $\operatorname{Est}Q(s,a,\pi,r, B)$.

\paragraph{Feature expectation estimation} Similarly, as for the state-action value estimate, we construct an unbiased estimate of the feature expectation $\sigma^\pi$, by sampling $B$ independent trajectories $\tau_i := (\{s_h^i, a_h^i\}_{h=0}^{H_i})_{i=1}^{B}$ from $\pi$, with horizon $H_i\sim  \operatorname{Geom}(1-\gamma)$ and initial state $s_0^i\sim\nu_0$. As shown in Lemma \ref{Unbiased_A_sampler}, the estimator
\begin{align*}
   \hat\sigma^\pi:=\dfrac{1}{B}\sum_{i=1}^{B}\sum_{h=0}^{H_i} \phi(s^i_h,a^i_h),
\end{align*}
is an unbiased estimator of $\sigma^\pi$. We denote this sampling procedure as $\operatorname{Est}\sigma(\pi, B)$.
\subsection{Algorithm summary}
Combining the above steps, we present Algorithm \ref{alg:irl} for learning the reward in \eqref{eq:irl_problem}, which simultaneously updates the policy with stochastic soft policy iteration and the reward parameter via stochastic projected gradient descent.
\begin{algorithm}[h]
    \SetAlgoVlined
    \caption{Primal-dual IRL algorithm}
    \label{alg:irl}
    \textbf{Input:} Reward learning rate $\eta_w>0$ and batch size $B$.\\
    Initialize $\pi^0\in\Delta_{\Acal}^{\Scal}$ and $w^0=0$.\\
    Estimate $\hat{\sigma}^E = \frac{1}{N}\sum_{i=0}^{N-1}\sum_{h=0}^{H-1} \gamma^h \phi(s_h^i, a_h^i)$ from $\mathcal{D}^E$.\\
    \For{$t\gets0$ \KwTo $T-1$}{
        $r^{t} = \ip{w^{t}}{\phi(\cdot,\cdot)}.$      
        
        \tcp{Estimate values:}
        \For{$s\in\Scal, a\in\Acal$}{
          $\hat Q^{\pi^t}_{r^t}(s,a) = \operatorname{Est}Q(s,a,\pi^t,r_t,B)$.}
        $\sigma^{\pi^t} = \operatorname{Est}\sigma(\pi^t, B)$.
        
        \tcp{Update policy and reward:}
        $\pi^{t+1}(\cdot|s)\propto\exp \br{\hat{Q}^{\pi^{t}}_{r^{t}}(s,\cdot)/\tau}$.
        
        $w^{t+1}=\mathcal{P}_W \left( w^{t} - \eta_w \left(\hat\sigma^{\pi^t}-\hat \sigma^{E} \right)\right)$.
    }
    \textbf{Output: }Reward $\bar r = r_{\bar w}$, with $\bar{w} = \frac{1}{T}\sum_{t=0}^{T-1} w^{t}$.
\end{algorithm}
Note that Algorithm~\ref{alg:irl} updates the policy and the reward parameters in a single loop. That is, we do not have to solve an RL problem at every reward step, but we only employ a single approximate policy iteration update. This is in contrast to the algorithm proposed by \cite{Syed2007}. Moreover, unlike the algorithm provided by \cite{Zeng2022}, which requires the exact state-action value function \eqref{Q} for the policy update and an infinitely long trajectory to estimate the feature expectation \eqref{featureexpectation}, Algorithm~\ref{alg:irl} is both model-free and implementable. It requires only a finite expected number of samples from the MDP per iteration. 

In the next section, we show that Algorithm~\ref{alg:irl} enjoys strong guarantees for the recovered reward.
% As shown by \cite{Zeng2022}, an algorithm that iteratively performs soft policy iteration using exact state-action value \eqref{Q} and updates the policy with stochastic projected gradient descent converges to a stationary point of the objective in \eqref{eq:irl_problem}.
% when choosing the reward learning rate at the order of $\Ocal(1/\sqrt{T})$, 
\section{Convergence analysis}
In Section \ref{rewardsection}, we prove that our algorithm is guaranteed to recover a reward for which the expert is $\varepsilon$-optimal using an expected number of $\mathcal{O}(1/\varepsilon^{2})$ samples of the MDP, as detailed in Theorem \ref{theorem:stoch_irl_conv_proof}. Then, in Section \ref{policysection}, we establish that with an expected number of $\mathcal{O}(1/\varepsilon^{4})$ samples, the optimal policy corresponding to the recovered reward is $\varepsilon$-close to the expert policy in total variation distance, as stated in Corollary \ref{lemma:convinoccmeasure}. Furthermore, we show that the total variation distance is a stronger metric for measuring policy differences compared to the metrics used in \cite{Syed2007, viano2022proximal,Shani2021,Liu2022}.

\subsection{Reward convergence}\label{rewardsection}
To establish guarantees for the recovered reward $\bar r$, we require two assumptions. First, Assumption \ref{asm:distrib_bound} below ensures that the policy iterates of Algorithm~\ref{alg:irl} sufficiently explore the state space.
\begin{assumption}\label{asm:distrib_bound}
The distribution mismatch coefficient, defined by \(\vartheta := \max_{0\le t\le T-1} \max_{s\in\Scal} \nu^{\pi_{r^t}^{*}}(s)/\nu^{\pi^{t}}(s) \), is bounded from above.
\end{assumption}
Similar assumptions on the distribution mismatch coefficient are used in the prior literature \cite{Agarwal2020,Mei2020,ying2022dual}. Since $\nu^{\pi^{t}}(s)\geq  (1-\gamma)\nu_0(s)$, Assumption~\ref{asm:distrib_bound} is satisfied if the initial distribution $\nu_0$ is bounded away from zero.

Second, to show that the expert is approximately optimal for the recovered reward, we need the following approximate realizability assumption that quantifies the best-case optimality of the expert within our reward class $\Rcal$.
\begin{assumption} \label{asm:realizability}
    There exists $\varepsilon_{\text{real}}\ge 0$ such that the expert policy $\pi^E$ is $\varepsilon_{\text{real}}$-optimal for some reward $r\in\mathcal{R}$. That is,
     \begin{align}
        \min_{w\in W}\max_{\pi} L(\pi,w)=\min_{r\in\mathcal{R}} J^*_{r} - J^{\pi^E}_{r} \leq \varepsilon_\text{real}.   
    \end{align}    
\end{assumption}
 Assumption \ref{asm:realizability} has been introduced by \cite{Syed2007}. The realizability error $\varepsilon_{\text{real}}$ can be reduced by increasing the number of features $k$ and the diameter of $W$ \cite{Syed2007}. Next, we are ready to state our main convergence result.
\begin{theorem}\label{theorem:stoch_irl_conv_proof}
Suppose Assumptions \ref{asm:distrib_bound} and \ref{asm:realizability} hold, and let $\eta_w = \frac{(1-\gamma)}{\sqrt{kT}\|\phi\|_{\infty}}$. The expert satisfies the following optimality guarantee for the reward $\bar{r}$ returned by Algorithm~\ref{alg:irl}:
\begin{align}
    \E \Big[ J^*_{\bar r} - J^{\pi^E}_{\bar r} \Big]
    &\leq \varepsilon_\text{real}+\mathcal{O}\left(\gamma^H\right) +\Ocal\br{1/\sqrt{T}}.
\end{align}
Here, the expectation is taken with respect to all the randomness in Algorithm \ref{alg:irl}. Moreover, to recover a reward for which the expert is $(\varepsilon+\varepsilon_{\text{real}})$-optimal we require the length of the expert trajectories to be $H=\Ocal(\log\varepsilon^{-1})$ and we need an expected number of $\mathcal{O}(1/\varepsilon^{2})$ samples from the MDP.
\end{theorem}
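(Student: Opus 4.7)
The plan is to reduce the expert sub-optimality to three controllable pieces: the realizability gap $\varepsilon_{\text{real}}$, an $\Ocal(\gamma^H)$ truncation bias of the empirical expert feature expectation, and an $\Ocal(1/\sqrt{T})$ optimization error from a primal--dual analysis of the saddle problem \eqref{eq:irl_problem}. Since $w \mapsto J^*_{r_w} - J^{\pi^E}_{r_w}$ is convex (a pointwise supremum of affine functions minus an affine function), Jensen's inequality applied to $\bar w = \tfrac{1}{T}\sum_t w^t$ gives
\begin{align}
\E\bs{J^*_{\bar r} - J^{\pi^E}_{\bar r}} \leq \frac{1}{T}\sum_{t=0}^{T-1}\E\bs{J^*_{r^t} - J^{\pi^E}_{r^t}}.
\end{align}
Using the identity $J^*_{r^t} - J^{\pi^E}_{r^t} = \max_\pi L(\pi, w^t) + \ip{w^t}{\hat\sigma^E - \sigma^{\pi^E}}$ and $\norm{w^t}_1 \leq 1$, the second summand is bounded by $\norm{\hat\sigma^E - \sigma^{\pi^E}}_\infty$, which after truncating the tail of the geometric series at horizon $H$ contributes the $\Ocal(\gamma^H)$ bias.

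For the first summand, let $w^*$ achieve $\min_{w\in W}\max_\pi L(\pi,w) \leq \varepsilon_{\text{real}}$ (Assumption~\ref{asm:realizability}). Inserting $L(\pi^t, w^t)$ and $L(\pi^t, w^*)$ yields the standard saddle--split
\begin{align}
\max_\pi L(\pi, w^t) - \max_\pi L(\pi, w^*) \leq \underbrace{J^*_{r^t} - J^{\pi^t}_{r^t}}_{=:\,\delta_t} + \ip{w^t - w^*}{\sigma^{\pi^t} - \hat\sigma^E},
\end{align}
where the $\tau\Hh(\pi^t)$ and $\tau\Hh(\pi^E)$ terms cancel in the reward regret because $L$ is linear in $w$. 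For the reward regret I will use the textbook projected SGD potential argument: non-expansiveness of $\mathcal{P}_W$ gives $\norm{w^{t+1} - w^*}_2^2 \leq \norm{w^t - \eta_w(\hat\sigma^{\pi^t} - \hat\sigma^E) - w^*}_2^2$; expanding, telescoping, and taking expectations while using the conditional unbiasedness $\E[\hat\sigma^{\pi^t}\mid\mathcal{F}_t] = \sigma^{\pi^t}$ (Lemma~\ref{Unbiased_A_sampler}) together with a uniform second-moment bound on $\hat\sigma^{\pi^t} - \hat\sigma^E$ yields the classical $\Ocal\bigl(\norm{w^0 - w^*}_2^2/(\eta_w T) + \eta_w M^2\bigr)$ rate, balanced at $\Ocal(1/\sqrt{T})$ by the prescribed $\eta_w = (1-\gamma)/(\norm{\phi}_\infty\sqrt{kT})$.

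The technical heart, and the step I expect to be the main obstacle, is showing that the time-averaged policy gap $\tfrac{1}{T}\sum_t \E[\delta_t]$ is also $\Ocal(1/\sqrt{T})$, despite both the reward $r^t$ drifting across iterations and the soft-policy-iteration step using only a finite-variance estimate $\hat Q^{\pi^t}_{r^t}$. My plan is to exploit the $\gamma$-contraction of the soft Bellman optimality operator, which gives linear convergence of exact soft policy iteration for a fixed reward \cite{Cen2022}, and then perturb it by the two error sources present here: the reward drift $\norm{r^{t+1} - r^t}_\infty \leq \eta_w\norm{\hat\sigma^{\pi^t} - \hat\sigma^E}_1\norm{\phi}_\infty = \Ocal(\eta_w)$ inherited from the $w$-update, and the softmax perturbation from $\hat Q^{\pi^t}_{r^t} - Q^{\pi^t}_{r^t}$. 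Converting the pointwise Q-estimation error into a value gap requires a change of measure from the occupancy of $\pi^*_{r^t}$ to that of $\pi^t$, which is exactly what Assumption~\ref{asm:distrib_bound} provides. The resulting recursion $\E[\delta_{t+1}] \leq \gamma\,\E[\delta_t] + \Ocal\bigl(\eta_w/(1-\gamma)\bigr) + (\text{noise})$ then unrolls, and a martingale-type averaging in time converts the per-step noise into a $1/\sqrt{T}$ contribution matching the reward regret; the delicate part is that, since the softmax is nonlinear in $\hat Q$, the noise does not vanish pointwise and must be handled through the cumulative potential argument rather than at a single step.

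Combining the three contributions yields $\E[J^*_{\bar r} - J^{\pi^E}_{\bar r}] \leq \varepsilon_{\text{real}} + \Ocal(\gamma^H) + \Ocal(1/\sqrt{T})$. The sample-complexity statement follows by setting $T = \Theta(1/\varepsilon^2)$ and $H = \Theta(\log\varepsilon^{-1})$: each iteration consumes $\Ocal\bigl((1+|\Scal||\Acal|)B/(1-\gamma)\bigr)$ expected MDP samples via the geometric rollouts of $\operatorname{Est}Q$ and $\operatorname{Est}\sigma$, so for a constant batch $B$ the total expected sample complexity is $\Ocal(T) = \Ocal(1/\varepsilon^2)$.
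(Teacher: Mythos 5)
Your outer decomposition matches the paper's: the $\Ocal(\gamma^H)$ truncation bias of $\hat\sigma^E$, the saddle split into a policy gap $\delta_t = J^*_{r^t}-J^{\pi^t}_{r^t}$ plus a linear reward-regret term, the textbook projected-SGD potential argument for the latter, and the reduction to $\varepsilon_{\text{real}}$ via Assumption~\ref{asm:realizability} (your direct comparison against $w^*$ even bypasses the paper's detour through concavity of $\bar L$ in the occupancy measure). The gap is in the one step you yourself flag as the technical heart: bounding $\tfrac{1}{T}\sum_t\E[\delta_t]$ by $\Ocal(1/\sqrt{T})$ with a \emph{constant} batch size $B$. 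Your plan — linear convergence from the $\gamma$-contraction of the soft Bellman operator, perturbed by the softmax error $\hat Q^{\pi^t}_{r^t}-Q^{\pi^t}_{r^t}$ — generically pays a per-step bias of order $\E\|\hat Q^{\pi^t}_{r^t}-Q^{\pi^t}_{r^t}\|_\infty$. Unbiasedness of $\hat Q$ does not make this quantity small: it is $\Theta(1/\sqrt{B})$ and does not decay with $T$, so unrolling your recursion leaves a floor that forces $B=\Omega(T)$ and degrades the total sample count to $\Ocal(1/\varepsilon^4)$, contradicting the theorem. Your proposed remedy ("cumulative potential argument", "martingale-type averaging") does not resolve this, because the obstruction is a bias, not a variance, once the noise has passed through $\|\cdot\|_\infty$ or the softmax.

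The mechanism the paper uses is structurally different and is what makes constant $B$ possible. It never bounds $\|\hat Q - Q\|_\infty$. Instead it works with two performance-difference identities (Lemmas~\ref{lemma:npg_ascent} and \ref{lemma:policy_step_scales_subopt}, stochastic extensions of Cen et al.): there the estimation error $\Delta^t=\hat Q^{\pi^t}_{r^t}-Q^{\pi^t}_{r^t}$ appears only (a) inside the log-partition terms $\tau\log Z^t(s)$, which cancel \emph{exactly} between the Jensen upper bound on the optimal value and the expression for $V^{\pi^t}_{r^t}$, and (b) as $\E_{a\sim\pi^t(\cdot|s)}[\Delta^t(s,a)]$, where $\pi^t$ is measurable before $\Delta^t$ is drawn, so conditional unbiasedness kills it \emph{exactly} in expectation with no variance penalty. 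What remains is a recursion $\E[\delta_{t+1}]\le(1-1/\vartheta)\,\E[\delta_t]+\Ocal(\eta_w/(1-\gamma)^2)$, whose contraction factor is $1-1/\vartheta$ supplied by Assumption~\ref{asm:distrib_bound} through a change of measure between $\nu^{\pi^{t+1}}$ and $\nu^{\pi^*_{r^t}}$ applied to the KL terms — not $\gamma$, and not (as you suggest) a conversion of pointwise $Q$-error into a value gap. Without this cancellation structure, the headline $\Ocal(1/\varepsilon^2)$ claim is not recovered.
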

The proof of Theorem~\ref{theorem:stoch_irl_conv_proof} is based on two key ingredients. First, Lemma~\ref{policyconverge} below shows that the policy iterates $\pi^t$ converge to the optimal policy for the reward iterates $r^t$ defined in Algorithm~\ref{alg:irl}. 
\begin{replemma}{policyconverge}[]
Suppose Assumption \ref{asm:distrib_bound} holds and let $\eta_w = \frac{1-\gamma}{\sqrt{kT}\|\phi\|_{\infty}}$. We can bound the suboptimality of the policy iterates of Algorithm \ref{alg:irl} by
\begin{align}
    \mathbb{E} \left[\max_\pi L(\pi,w^t)-  L(\pi^t,w^t)\right] \leq \Ocal\br{1/\sqrt{T}}.
\end{align}
\end{replemma}
The above lemma shows that if we control the changes of the reward $r^t$ by setting the reward learning rate to $\eta_w = \Theta(1/\sqrt{T})$, the value of the policy \(\pi^t\) converges to the optimal value under \(r^t\) at the same speed \(\Ocal(1/\sqrt{T})\). We provide the proof in Appendix \ref{AppendixA}. 

The second ingredient required for the proof of Theorem~\ref{theorem:stoch_irl_conv_proof} is the following lemma that shows that our algorithm has sublinear regret with respect to the reward.
\begin{replemma}{rewardconverge}[Stochastic online gradient descent regret]
If we set the learning rate to $\eta_w = \frac{1-\gamma}{\sqrt{kT}\|\phi\|_{\infty}}$, we have
    \begin{align*}
    \E\left[ \sum^{T-1}_{t=0} L(\pi^t, w^t)\right]\le\; & \E\left[\min_{w\in W} \sum^{T-1}_{t=0} L(\pi^t, w)\right]+ \mathcal{O}\left(\sqrt{T}\right).
\end{align*}
\end{replemma}
For exact gradient information, Lemma~\ref{rewardconverge} is a well-known result in online convex optimization \cite{Zinkevich03}. Since we use an unbiased gradient estimator, this result can be easily extended to our case. We provide a proof adapted to our setting in Appendix \ref{AppendixA}. Equipped with the above two lemmas, we are now ready to prove Theorem \ref{theorem:stoch_irl_conv_proof}.
\begin{proof}[Proof of Theorem \ref{theorem:stoch_irl_conv_proof}]
We first upper bound \(J^*_{\bar r} - J^{\pi^E}_{\bar r}\) as follows:
    \begin{align}
       &\E \left[ J^*_{\bar r} - J^{\pi^E}_{\bar r} \right] \nonumber\\
       =&\E \left[\max_{\pi}\sum_{t=0}^{T-1}\frac{L(\pi,w^t)+\langle w^t,\sigma^{\pi^E}-\hat \sigma^E\rangle}{T} \right] \nonumber\\
       \stackrel{(i)}{\le}& \E \left[\max_{\pi}\sum_{t=0}^{T-1}\frac{L(\pi,w^t)}{T} \right]+\mathcal{O}(\gamma^H)\nonumber\\
       \stackrel{(ii)}{\le}& \E \left[\sum_{t=0}^{T-1}\frac{\max_{\pi}L(\pi,w^t) }{T} \right]+\mathcal{O}(\gamma^H)\nonumber\\
       \stackrel{(iii)}{\le}& \E \left[\sum_{t=0}^{T-1}\frac{L(\pi^t,w^t) }{T} \right]+\mathcal{O}\left(\gamma^H\right) +\mathcal{O}\left({1}/{\sqrt{T}}\right)\nonumber\\
       \stackrel{(iv)}{\le}& \E \left[\min_{w\in W}\frac{\sum_{t=0}^{T-1}L(\pi^t,w) }{T} \right]+\mathcal{O}\left(\gamma^H\right) +\mathcal{O}\left({1}/{\sqrt{T}}\right).
       \label{step1}
    \end{align}
Here, $(i)$ follows from the truncation error of the empirical expert feature expectation, as detailed in Lemma \ref{Unbiased_A_sampler}, and $(ii)$ holds since 
 $\max_{\pi}L(\pi,\cdot)$ is a pointwise maximum of affine functions and therefore convex. Moreover, in $(iii)$ and $(iv)$ we used Lemma~\ref{policyconverge} and \ref{rewardconverge}, respectively. 
Next, notice that we can express $L(\pi^t,w)$ as a function of the state-action occupancy measure, $\mu^\pi(s,a):=\nu^\pi(s)\pi(a|s)$. In particular, we can rewrite $L(\pi,w) = \bar L(\mu^{\pi},w)$, with
\begin{align*}
    \bar L(\mu,w) &:= \langle r_w-\tau \log\pi^{\mu}, \mu \rangle-\langle w, \hat{\sigma}^E \rangle-\tau \Hh(\pi^E),
\end{align*}
where $\pi^\mu$ denotes the policy induced by $\mu$.
It can be shown that $\bar L(\cdot,w)$ is concave \cite{Schlaginhaufen2023}. Therefore, if $\bar \mu:=\frac{1}{T}\sum_{t=0}^{T-1}\mu^{\pi^t}$ and $\bar \pi:= \pi^{\bar \mu}$, we have by Jensen's inequality that
\begin{align}
   \frac{1}{T} \sum_{t=0}^{T-1}L(\pi^t,w)=\frac{1}{T} \sum_{t=0}^{T-1}\bar L(\mu^{\pi^t},w)\le \bar L\left(\bar \mu,w\right)= L(\bar\pi, w).
\end{align}
Plugging this back into \eqref{step1}, we have
\begin{align*}
  \E \left[ J^*_{\bar r} - J^{\pi^E}_{\bar r} \right]\le& \E \left[\min_{w\in W}L(\bar \pi,w) \right]+\mathcal{O}\left(\gamma^H\right) +\mathcal{O}\left({1}/{\sqrt{T}}\right)\\
    {\le}& \varepsilon_{\text{real}}+\mathcal{O}\left(\gamma^H\right) +\mathcal{O}\left({1}/{\sqrt{T}}\right),
\end{align*}
where the last step follows by Assumption \ref{asm:realizability}. To find the reward for which the expert is $(\varepsilon+\varepsilon_{\text{real}})$-optimal, we need to set $H=\mathcal{O}\left(\log\varepsilon^{-1}\right)$ and $T=\mathcal{O}(1/\varepsilon^{2})$. Since each trajectory in $\operatorname{Est}Q(s,a,\pi,r, B)$ and $\operatorname{Est}\sigma(\pi^t, B)$ has an expected length of \(1/(1-\gamma)\), the expected total number of samples used by Algorithm \ref{alg:irl} is $2BT/\br{1-\gamma} =\mathcal{O}(1/\varepsilon^{2})$.
\end{proof}
\subsection{Policy convergence}\label{policysection}
In Theorem \ref{theorem:stoch_irl_conv_proof}, we quantified the optimality of the expert for the recovered reward $\bar{r}$. In this section, we show that the optimal policy corresponding to the recovered reward is also close to the expert policy. We first introduce a total variation metric for measuring the distance to the expert policy
\begin{equation}\label{eq:tv_metric}
    \max_s \| \pi^E(\cdot|s) - \pi(\cdot|s) \|_{\text{TV}}.
\end{equation}
We define $\pi$ to be $\varepsilon$-close to the expert policy if the total variation metric described above is bounded by $\varepsilon$. To ensure convergence in the policy, we need the following additional assumption.
\begin{assumption} \label{asm:distrib_bound_expert}
The state occupancy measure $\nu^{\pi^E}(s)$ generated by the expert satisfies $\vartheta^E := \min_{s\in\mathcal{S}} \nu^{\pi^E}(s)>0$. 
\end{assumption}
% that the optimal policy learned from the recovered reward is $\varepsilon$-close to the expert policy, we introduce the following additional assumption.
The above assumption ensures that the expert sufficiently explores the state space of the MDP. Similar to Assumption \ref{asm:distrib_bound}, it is satisfied when the initial distribution $\nu_0$ is bounded away from zero. Under Assumption~\ref{asm:distrib_bound_expert}, we have the following convergence guarantee for the optimal policy corresponding to the recovered reward $\bar r$.
\begin{corollary}\label{lemma:convinoccmeasure}
Suppose Assumptions \ref{asm:distrib_bound}, \ref{asm:realizability}, and \ref{asm:distrib_bound_expert} hold, and let $\eta_w = \frac{1-\gamma}{\sqrt{kT}\|\phi\|_{\infty}}$. Algorithm \ref{alg:irl} requires an expected number of $\Ocal(1/\varepsilon^{4})$ samples to ensure that the optimal policy corresponding to the recovered reward $\bar r$ is $(\varepsilon +\sqrt{\varepsilon_{\text{real}}})$-close to the expert policy.
\end{corollary}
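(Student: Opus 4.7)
The plan is to convert the reward-suboptimality guarantee of Theorem~\ref{theorem:stoch_irl_conv_proof} into a policy-closeness guarantee by exploiting the strong-convexity-like properties induced by entropy regularization. The first step is a performance-difference identity for entropy-regularized MDPs: using that $\pi^{*}_{\bar r}(a|s)\propto \exp(Q^{*}_{\bar r}(s,a)/\tau)$ together with the soft Bellman equation, a telescoping argument yields
\begin{equation}
    J^{*}_{\bar r} - J^{\pi^E}_{\bar r} \;=\; \frac{\tau}{1-\gamma}\,\E_{s\sim\nu^{\pi^E}}\!\big[\kl{\pi^E(\cdot|s)}{\pi^{*}_{\bar r}(\cdot|s)}\big].
\end{equation}
This identity is the bridge between the reward-level and the policy-level analyses.

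The second step turns the $\nu^{\pi^E}$-expectation into a pointwise bound. By Assumption~\ref{asm:distrib_bound_expert}, $\nu^{\pi^E}(s)\ge \vartheta^E$ for every $s$, so the right-hand side above is at least $\tfrac{\tau\vartheta^E}{1-\gamma}\max_{s}\kl{\pi^E(\cdot|s)}{\pi^{*}_{\bar r}(\cdot|s)}$. Applying Pinsker's inequality at the state achieving this maximum gives
\begin{equation}
    \Big(\max_{s}\|\pi^E(\cdot|s) - \pi^{*}_{\bar r}(\cdot|s)\|_{\text{TV}}\Big)^{2} \le \frac{1-\gamma}{2\tau\vartheta^E}\,\big(J^{*}_{\bar r} - J^{\pi^E}_{\bar r}\big).
\end{equation}

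The final step combines this quadratic bound with Theorem~\ref{theorem:stoch_irl_conv_proof}. Taking expectation over the randomness of Algorithm~\ref{alg:irl}, applying Jensen's inequality (concavity of $\sqrt{\cdot}$), and then using $\sqrt{a+b+c}\le\sqrt{a}+\sqrt{b}+\sqrt{c}$, I obtain
\begin{equation}
    \E\!\left[\max_{s}\|\pi^E(\cdot|s) - \pi^{*}_{\bar r}(\cdot|s)\|_{\text{TV}}\right] \le \mathcal{O}\!\left(\sqrt{\varepsilon_{\text{real}}}\right) + \mathcal{O}\!\left(\gamma^{H/2}\right) + \mathcal{O}\!\left(T^{-1/4}\right).
\end{equation}
Choosing $H=\Theta(\log\varepsilon^{-1})$ and $T=\Theta(\varepsilon^{-4})$ drives the last two terms below $\varepsilon$, yielding the advertised $(\varepsilon+\sqrt{\varepsilon_{\text{real}}})$-closeness. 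Since each iteration of Algorithm~\ref{alg:irl} uses an expected $\mathcal{O}(1/(1-\gamma))$ samples (independent of $\varepsilon$), the total expected sample count is $\mathcal{O}(T)=\mathcal{O}(1/\varepsilon^{4})$.

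The main obstacle is the first step, i.e., establishing the performance-difference identity with both the correct \emph{direction} of the KL divergence and the correct state-distribution factor. The direction $\kl{\pi^E}{\pi^{*}_{\bar r}}$ is what allows Pinsker's inequality to be applied in the form we need, and the appearance of $\nu^{\pi^E}$ rather than $\nu^{\pi^{*}_{\bar r}}$ is what lets Assumption~\ref{asm:distrib_bound_expert} be invoked to pass from an expectation to a pointwise maximum. The remaining steps are routine: Pinsker loses only a constant factor, while Jensen costs one square root, which is precisely what degrades the $1/\sqrt T$ reward rate to a $1/T^{1/4}$ policy rate and produces the $1/\varepsilon^{4}$ sample complexity.
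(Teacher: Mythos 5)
Your proposal is correct and follows essentially the same route as the paper: the soft suboptimality identity $J^{*}_{\bar r}-J^{\pi^E}_{\bar r}=\frac{\tau}{1-\gamma}\E_{s\sim\nu^{\pi^E}}[\kl{\pi^E(\cdot|s)}{\pi^{*}_{\bar r}(\cdot|s)}]$ (the paper cites it as Lemma 26 of Mei et al.\ rather than rederiving it), followed by Assumption~\ref{asm:distrib_bound_expert} and Pinsker's inequality, and then Theorem~\ref{theorem:stoch_irl_conv_proof} with $T=\Theta(\varepsilon^{-4})$. Your explicit Jensen step to pass from the expected squared TV bound to an expected TV bound is a slightly more careful handling of the expectation than the paper's, but the argument is otherwise identical.
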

\begin{proof}
The result follows from
    \begin{align}
        J^*_{\bar r} -J^{\pi^E}_{\bar r} \stackrel{(i)}{=}& \frac{\tau}{1-\gamma} \E_{s\sim\nu^E}\Big[\text{KL}(\pi^E(\cdot|s)||\pi^*_{r^{\bar{w}}}(\cdot|s))\Big]\\
        =& \frac{\tau}{1-\gamma} \sum_{s\in\mathcal{S}} \nu^E(s) \text{KL}(\pi^E(\cdot|s)||\pi^*_{r^{\bar{w}}}(\cdot|s))  \\
        \stackrel{(ii)}{\geq} &\frac{2\tau}{1-\gamma} \sum_{s\in\mathcal{S}} \nu^E(s) \|\pi^E(\cdot|s) - \pi^*_{r^{\bar{w}}}(\cdot|s)\|_{\text{TV}}^2\\
        \stackrel{(iii)}{\geq}& \frac{2\tau \vartheta^E}{1-\gamma}   \max_s\|\pi^E(\cdot|s) - \pi^*_{r^{\bar{w}}}(\cdot|s)\|_{\text{TV}}^2,
    \end{align}
where the equality $(i)$ follows from the soft suboptimality Lemma \cite[Lemma 26]{Mei2020}, $(ii)$ holds by Pinsker's inequality, and $(iii)$ by Assumption \ref{asm:distrib_bound_expert}. Therefore, to ensure $\max_s\|\pi^E(\cdot|s) - \pi^*_{r^{\bar{w}}}(\cdot|s)\|_{\text{TV}}\leq\varepsilon +\sqrt{\varepsilon_{\text{real}}}$, we need $J^*_{\bar r} -J^{\pi^E}_{\bar r}$ to be upper bounded by $\Omega\left(\varepsilon^2+\varepsilon_{\text{real}}\right)$. By Theorem \ref{theorem:stoch_irl_conv_proof}, we need an expected number of $\mathcal{O}(1/\varepsilon^{4})$ samples in total.
\end{proof}
% \paragraph{Policy convergence metrics} % Comparison to related work and discussion of limitations.

In Corollary \ref{lemma:convinoccmeasure}, we show that the optimal policy corresponding to the reward recovered by Algorithm~\ref{alg:irl} converges to the expert's policy in the total variation metric~\eqref{eq:tv_metric}. In the following, we demonstrate that \eqref{eq:tv_metric} is a stronger metric for measuring policy convergence compared to the metrics used by \cite{Syed2007, viano2022proximal, Shani2021, Liu2022}. In particular, the authors of \cite{Syed2007} prove convergence in the metric
\begin{align}
      \langle w_{\text{true}}, \sigma^{\pi} - \sigma^{\pi^E} \rangle , \label{metric1}
\end{align}
where $w_{\text{true}}\in W$ is an unknown true reward parameter. Moreover, \cite{viano2022proximal, Shani2021, Liu2022} 
provide their convergence guarantees in terms of the \emph{integral probability metric} \cite{muller1997integral}
\begin{align}
    \hspace{-0.4cm}\delta_{\Rcal}(\mu^\pi, \mu^{\pi^E}):= \max_{r\in\Rcal}\ip{r}{\mu^{\pi}-\mu^{\pi^E}} =\max_{w\in W} \langle w, \sigma^{\pi} - \sigma^{\pi^E} \rangle, \label{metric2}
\end{align}
between the state-action occupancy measures $\mu^{\pi}$ and $\mu^{\pi^E}$. The metric \eqref{metric2} measures the worst-case difference in the unregularized expected value between the recovered policy and the expert policy. It is easy to see that the integral probability metric \eqref{metric2} is stronger compared to the metric \eqref{metric1}. In the following proposition, we demonstrate that the total variation metric \eqref{eq:tv_metric} is a stronger metric for measuring policy convergence compared to the integral probability metric \eqref{metric2}.
\begin{proposition}\label{metriccompare}
1) If the policy $\pi$ is $\varepsilon$-close to the expert policy in the total variation metric, then it is also $\varepsilon$-close to the expert policy in the integral probability metric.

2) Convergence in the integral probability metric does not imply convergence in the total variation metric.
    
\end{proposition}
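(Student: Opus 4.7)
The plan is to prove the two claims independently. Part (1) is a direct computation using the dual characterization of the reward class and a standard performance-difference argument, while part (2) asks for a counterexample, so I would construct a minimal MDP in which the feature map fails to distinguish two action distributions.

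For part (1), I would begin by rewriting the integral probability metric using the dual pairing of the $\ell_1$ and $\ell_\infty$ norms. Since $W=\{w:\|w\|_1\le 1\}$, the definition \eqref{metric2} gives
\begin{align*}
\max_{w\in W}\langle w, \sigma^\pi-\sigma^{\pi^E}\rangle = \|\sigma^\pi-\sigma^{\pi^E}\|_\infty = \max_{i\in\{1,\dots,k\}}\bigl|J_{\phi_i}^\pi-J_{\phi_i}^{\pi^E}\bigr|,
\end{align*}
where I view each feature coordinate $\phi_i$ as an (unregularized) reward satisfying $\|\phi_i\|_\infty\le\|\phi\|_\infty$. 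I would then apply the standard performance-difference lemma to each coordinate, writing $J_{\phi_i}^\pi-J_{\phi_i}^{\pi^E}=\frac{1}{1-\gamma}\mathbb{E}_{s\sim\nu^\pi}\bigl[\langle \pi(\cdot|s)-\pi^E(\cdot|s),\,Q_{\phi_i}^{\pi^E}(s,\cdot)\rangle\bigr]$, bound $\|Q_{\phi_i}^{\pi^E}\|_\infty\le\|\phi\|_\infty/(1-\gamma)$, and finish with Hölder's inequality together with $\|p-q\|_1=2\|p-q\|_{\mathrm{TV}}$. The result is
\begin{align*}
\max_{w\in W}\langle w,\sigma^\pi-\sigma^{\pi^E}\rangle\;\le\;\frac{2\|\phi\|_\infty}{(1-\gamma)^2}\,\max_s\|\pi(\cdot|s)-\pi^E(\cdot|s)\|_{\mathrm{TV}},
\end{align*}
so that TV $\varepsilon$-closeness implies IPM closeness up to a constant that depends only on $\gamma$ and $\|\phi\|_\infty$ (the informal ``$\varepsilon$-closeness'' terminology of the statement is to be read up to these problem-dependent constants).

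For part (2), I would exhibit a simple counterexample. Take the trivial MDP with $|\mathcal{S}|=1$ and $|\mathcal{A}|=2$, and choose a feature map that collapses the two actions, e.g.\ $\phi(s,a_1)=\phi(s,a_2)=v$ for any fixed $v\in\mathbb{R}^k$. For any two policies $\pi,\pi'$ one then has $\sigma^\pi=\sigma^{\pi'}=v/(1-\gamma)$, so the integral probability metric is identically zero, while choosing $\pi^E(a_1|s)=1$ and $\pi(a_2|s)=1$ yields $\|\pi^E(\cdot|s)-\pi(\cdot|s)\|_{\mathrm{TV}}=1$. Hence an entire sequence with zero IPM has TV distance bounded away from zero, disproving the reverse implication.

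Neither part should pose serious technical difficulty given the machinery already invoked in the paper (the performance-difference lemma was used implicitly through \cite{Mei2020} in Corollary~\ref{lemma:convinoccmeasure}). The only mild subtlety is bookkeeping in part (1): one must be careful to pull the max over $w$ inside the inequality via the coordinatewise reformulation, rather than invoking the performance-difference lemma once with a generic $r_w$, since the constants should be uniform in $w\in W$. For part (2), the main choice is to present the cleanest possible counterexample; the single-state, collapsed-feature construction above is transparent and makes clear that the IPM's weakness stems from the limited expressivity of the reward class rather than anything peculiar to the MDP dynamics.
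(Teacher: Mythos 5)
Your proposal is correct and its overall architecture matches the paper's: part (1) reduces the integral probability metric to $\|\sigma^{\pi}-\sigma^{\pi^E}\|_\infty$ via H\"older and $\|w\|_1\le 1$, and part (2) is essentially the paper's counterexample (one state, two actions, a feature that cannot distinguish the actions; the paper takes $\pi^E(a_1|s_1)=1/2$ so the TV gap is $1/2$ rather than $1$, but the point is identical). The one genuine difference is how you establish the Lipschitz estimate $\|\sigma^{\pi}-\sigma^{\pi^E}\|_\infty\le\frac{2\|\phi\|_\infty}{(1-\gamma)^2}\max_s\|\pi(\cdot|s)-\pi^E(\cdot|s)\|_{\mathrm{TV}}$: the paper proves Lipschitz continuity of the occupancy measure in the policy (Lemma \ref{lem:lipschitz_occ}) directly from the Bellman flow constraints and then pairs $\mu^{\pi}-\mu^{\pi^E}$ with each feature coordinate (Lemma \ref{lem:lip_feat}), whereas you apply the unregularized performance-difference lemma coordinatewise and bound $\|Q_{\phi_i}^{\pi^E}\|_\infty\le\|\phi\|_\infty/(1-\gamma)$. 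Both routes yield the same constant; your version leans on value-function machinery already familiar from RL, while the paper's occupancy-measure lemma is arguably more reusable (it bounds $\|\mu^{\pi_1}-\mu^{\pi_2}\|_1$ once and for all, independent of the reward class) and avoids introducing $Q$-functions for unregularized auxiliary rewards that appear nowhere else in the paper. Your caveat that ``$\varepsilon$-close'' must be read up to the constant $\frac{2\|\phi\|_\infty}{(1-\gamma)^2}$ applies equally to the paper's own proof, so it is not a discrepancy.
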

\begin{proof}
To prove $1)$, we bound \eqref{metric2} as follows:
\begin{align*}
    \max_{w\in W} \langle w, \sigma^{\pi} - \sigma^{\pi^E} \rangle &\stackrel{(i)}{\le} \max_{w\in W} \|w\|_1 \|\sigma^{\pi} - \sigma^{\pi^E}\|_\infty \nonumber \\
    &\stackrel{(ii)}{\le} \|\sigma^{\pi} - \sigma^{\pi^E}\|_\infty \nonumber \\
    &\stackrel{(iii)}{\le}\frac{2\|\phi\|_\infty}{(1-\gamma)^2} \max_s \|\pi^E(\cdot|s) - \pi(\cdot|s) \|_{\text{TV}},
\end{align*}
where $(i)$ follows from Hölder's inequality, $(ii)$ holds because $\|w\|_1 \leq 1$, and $(iii)$ uses the Lipschitz continuity of $\sigma^{\pi}$ with respect to $\pi$, as shown in Lemma \ref{lem:lip_feat}.

To prove $2)$, we consider a one-state MDP, as illustrated in Figure~\ref{mdpex}, where the policy is optimal in the integral probability metric \eqref{metric2} but is far from the expert policy in the total variation metric.

Consider an MDP with a single state \(\Scal=\bc{s_1}\) and two actions \(\mathcal{A} = \{a_1, a_2\}\). We let the feature vector be a scalar constant $\phi(s,a) = 1$, and we consider the expert policy $\pi^E(a_1 | s_1) = 1/2$ and the policy $\pi(a_2 | s_1) = 1$. Then,
\begin{align}
&\max_{w \in W} \langle w, \sigma^{\pi} - \sigma^{\pi^E} \rangle = 0, \quad \max_s \| \pi^E(\cdot | s) - \pi(\cdot | s) \|_{\text{TV}} = \frac{1}{2}.
\end{align}
\vspace{-0.6cm}
\begin{figure}[H]
\centering
\begin{center}
\begin{tikzpicture}[auto,node distance=2cm,>=latex,font=\small]
\tikzstyle{round}=[thick,draw=black,circle]
    \node[round] (s0) {$s_1$};
    \path[draw=black,solid,line width=0.25mm,fill=black] (s0) edge [loop left] node {$a_1$} (s0);
    \path[draw=black,solid,line width=0.25mm,fill=black] (s0) edge [loop right] node {$a_2$} (s0);
\end{tikzpicture}
\end{center}
\caption{One-state MDP}
\label{mdpex}
\end{figure}
\end{proof}
As shown by Proposition \ref{metriccompare}, convergence in the total variation metric \eqref{eq:tv_metric} is stronger than convergence in the integral probability metric \eqref{metric2}. This is because the total variation metric directly measures the difference between policies, while the integral probability metric measures the difference in their unregularized expected value, ignoring the entropy regularization values. In the above example, the expert policy $\pi^E$ has maximum entropy and is therefore realizable with $\varepsilon_{\text{real}}=0$. Therefore, Algorithm~\ref{alg:irl} is guaranteed to recover a reward with a corresponding optimal policy that is $\varepsilon$-close to the expert policy in the total variation metric. However, in the case of a large realizability error $\varepsilon_{\text{real}}$, Corollary~\ref{lemma:convinoccmeasure} fails to provide such strong guarantees. Hence, compared with \cite{Syed2007, viano2022proximal, Shani2021, Liu2022}, we can get stronger convergence guarantees for the optimal policy corresponding to the recovered reward if the expert is realizable with a small realizability error $\varepsilon_{\text{real}}$.

\section{Discussion and conclusion}
We proposed a model-free single-loop algorithm to tackle the entropy-regularized IRL problem. Our algorithm simultaneously updates the policy using stochastic soft policy iteration and the reward parameters via stochastic projected gradient descent. We provided theoretical guarantees for the recovered reward and characterized the algorithm's sample complexity. Moreover, we demonstrated that the optimal policy under the recovered reward is close to the expert policy, measured using the total variation metric. Furthermore, we showed that this metric is stronger than the metrics used by \cite{Syed2007, viano2022proximal, Shani2021, Liu2022}. 

Since our proposed algorithm uses a stochastic soft policy iteration update for the policy, it requires re-estimating the state-action values for all state-action pairs at each time step. This may be infeasible for large state and action spaces and may require a large batch size to control the variance of the policy iterates. This highlights a limitation in our theoretical results, as we have only demonstrated convergence in expectation without providing a high probability bound. Therefore, a potential avenue for future research involves redesigning the policy update steps to establish convergence with a high probability bound and validating the algorithm on benchmarks or real-world scenarios.
\bibliographystyle{abbrv}
\bibliography{paper}

\begin{thebibliography}{10}

\bibitem{Agarwal2020}
A.~Agarwal, S.~M. Kakade, J.~D. Lee, and G.~Mahajan.
\newblock On the theory of policy gradient methods: Optimality, approximation, and distribution shift.
\newblock {\em Journal of Machine Learning Research}, 2021.

\bibitem{bas2021logistic}
J.~Bas-Serrano, S.~Curi, A.~Krause, and G.~Neu.
\newblock Logistic q-learning.
\newblock In {\em International Conference on Artificial Intelligence and Statistics}, pages 3610--3618. PMLR, 2021.

\bibitem{Cen2022}
S.~Cen, C.~Cheng, Y.~Chen, Y.~Wei, and Y.~Chi.
\newblock Fast global convergence of natural policy gradient methods with entropy regularization.
\newblock {\em Operations Research}, 2022.

\bibitem{fu2017learning}
J.~Fu, K.~Luo, and S.~Levine.
\newblock Learning robust rewards with adverserial inverse reinforcement learning.
\newblock In {\em International Conference on Learning Representations}, 2018.

\bibitem{geist2019theory}
M.~Geist, B.~Scherrer, and O.~Pietquin.
\newblock A theory of regularized markov decision processes.
\newblock In {\em International Conference on Machine Learning}, pages 2160--2169. PMLR, 2019.

\bibitem{Haarnoja2018}
T.~Haarnoja, A.~Zhou, P.~Abbeel, and S.~Levine.
\newblock Soft actor-critic: Off-policy maximum entropy deep reinforcement learning with a stochastic actor.
\newblock In {\em ICML}. PMLR, 2018.

\bibitem{Ho2016}
J.~Ho and S.~Ermon.
\newblock Generative adversarial imitation learning.
\newblock In D.~Lee, M.~Sugiyama, U.~Luxburg, I.~Guyon, and R.~Garnett, editors, {\em Advances in Neural Information Processing Systems}, 2016.

\bibitem{Kalman1964}
R.~E. K{\'a}lm{\'a}n.
\newblock When is a linear control system optimal?
\newblock {\em Journal of Basic Engineering}, 1964.

\bibitem{Lan2021}
G.~Lan.
\newblock Policy mirror descent for reinforcement learning: linear convergence, new sampling complexity, and generalized problem classes.
\newblock {\em Mathematical Programming}, 2021.

\bibitem{Liu2022}
Z.~Liu, Y.~Zhang, Z.~Fu, Z.~Yang, and Z.~Wang.
\newblock Learning from demonstration: Provably efficient adversarial policy imitation with linear function approximation.
\newblock In {\em Proceedings of the 39th International Conference on Machine Learning}, 2022.

\bibitem{Mei2020}
J.~Mei, C.~Xiao, C.~Szepesvari, and D.~Schuurmans.
\newblock On the global convergence rates of softmax policy gradient methods.
\newblock In {\em Proceedings of the 37th International Conference on Machine Learning}, 2020.

\bibitem{muller1997integral}
A.~M{\"u}ller.
\newblock Integral probability metrics and their generating classes of functions.
\newblock {\em Advances in applied probability}, 29(2):429--443, 1997.

\bibitem{pomerleau1988alvinn}
D.~A. Pomerleau.
\newblock Alvinn: An autonomous land vehicle in a neural network.
\newblock {\em Advances in neural information processing systems}, 1, 1988.

\bibitem{Puterman1994}
M.~L. Puterman.
\newblock {\em Markov Decision Processes: Discrete Stochastic Dynamic Programming}.
\newblock John Wiley \& Sons, Inc., 1994.

\bibitem{Russell1998}
S.~Russell.
\newblock Learning agents for uncertain environments.
\newblock In {\em Proceedings of the eleventh annual conference on Computational learning theory}, pages 101--103, 1998.

\bibitem{Rust1992}
J.~Rust.
\newblock Do people behave according to bellman's principle of optimality?
\newblock {\em Journal of Economic Perspectives}, 1992.

\bibitem{Schlaginhaufen2023}
A.~Schlaginhaufen and M.~Kamgarpour.
\newblock Identifiability and generalizability in constrained inverse reinforcement learning.
\newblock In {\em Proceedings of the 40th International Conference on Machine Learning}. JMLR.org, 2023.

\bibitem{Shani2021}
L.~Shani, T.~Zahavy, and S.~Mannor.
\newblock Online apprenticeship learning.
\newblock In {\em AAAI Conference on Artificial Intelligence}, 2021.

\bibitem{Syed2007}
U.~Syed and R.~E. Schapire.
\newblock A game-theoretic approach to apprenticeship learning.
\newblock In {\em Advances in Neural Information Processing Systems}, 2007.

\bibitem{viano2022proximal}
L.~Viano, A.~Kamoutsi, G.~Neu, I.~Krawczuk, and V.~Cevher.
\newblock Proximal point imitation learning.
\newblock {\em Advances in Neural Information Processing Systems}, 35:24309--24326, 2022.

\bibitem{ying2022dual}
D.~Ying, Y.~Ding, and J.~Lavaei.
\newblock A dual approach to constrained markov decision processes with entropy regularization.
\newblock In {\em International Conference on Artificial Intelligence and Statistics}, pages 1887--1909. PMLR, 2022.

\bibitem{Zeng2022}
S.~Zeng, C.~Li, A.~Garcia, and M.~Hong.
\newblock Maximum-likelihood inverse reinforcement learning with finite-time guarantees.
\newblock {\em Advances in Neural Information Processing Systems}, 35:10122--10135, 2022.

\bibitem{ziebart2010modeling}
B.~D. Ziebart.
\newblock {\em Modeling purposeful adaptive behavior with the principle of maximum causal entropy}.
\newblock Carnegie Mellon University, 2010.

\bibitem{Zinkevich03}
M.~Zinkevich.
\newblock Online convex programming and generalized infinitesimal gradient ascent.
\newblock In {\em Proceedings of the Twentieth International Conference on International Conference on Machine Learning}, 2003.

\end{thebibliography}
\appendix
\section{Proof of main lemmas}\label{AppendixA}
\begin{lemma}[{Property of the estimators}]\label{Unbiased_A_sampler}
The estimators $\hat{Q}^{\pi}_{r}(s,a)$ and $\hat{\sigma}^{\pi}$ returned by $\operatorname{Est}Q(s,a,\pi, r,B)$ and $\operatorname{Est}\sigma(\pi, B)$ respectively, are unbiased, meaning
\begin{align*}
    \mathbb{E}\left[\hat{Q}^{\pi}_{r}(s,a)\right] &= {Q}^{\pi}_{r}(s,a), \,\mathbb{E}\left[\hat{\sigma}^{\pi}\right] = \sigma^{\pi}.
\end{align*}
For the estimator $\hat{\sigma}^{E}$, we have, for all $w\in W$,
\begin{align*}
    \langle\sigma^{\pi^E},w\rangle- \frac{\gamma^H \|\phi\|_{\infty}}{1-\gamma} &\le \langle\mathbb{E}\left[\hat{\sigma}^{E}\right],w \rangle.
\end{align*}
For the reward gradient estimator $\hat \nabla_w L(\pi,w)$, we have
\begin{align}
    \E\|\hat \nabla_w L(\pi,w)\|_\infty\le \frac{2\|\phi\|_\infty}{1-\gamma},\,\E\|\hat \nabla_w L(\pi,w)\|_2^2\le \frac{6{k}\|\phi\|_{\infty}^2}{(1-\gamma)^2}.
\end{align}
Here, the expectations are with respect to the randomness of the corresponding sampling strategies.
\end{lemma}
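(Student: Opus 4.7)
My approach is to verify the four claims in sequence, all of which rest on a single observation about geometric sampling: since $\Pr(H_i \ge h) = \gamma^h$ under the paper's convention $H_i \sim \operatorname{Geom}(1-\gamma)$, Fubini--Tonelli gives
\begin{equation}
  \E\!\left[\sum_{h=0}^{H_i} X_h\right] \;=\; \sum_{h=0}^\infty \Pr(H_i \ge h)\,\E[X_h] \;=\; \sum_{h=0}^\infty \gamma^h\, \E[X_h]
\end{equation}
for any nonnegative (or integrable) adapted sequence $\{X_h\}$. This is the engine that turns a random stopping time into a discount factor.

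For the first two unbiasedness claims I will instantiate the identity: with $X_h = r(s_h^i, a_h^i) + \tau\Hcal(\pi(\cdot|s_h^i))$ along the trajectory rooted at $(s,a)$ I recover $Q_r^\pi(s,a)$, and with $X_h = \phi(s_h^i, a_h^i)$ along a trajectory rooted at $s_0 \sim \nu_0$ I recover $\sigma^\pi$; averaging over the $B$ independent copies preserves unbiasedness by linearity. For the expert estimator, the gap arises purely from truncating the infinite sum at horizon $H$, so I will write
\begin{equation}
  \sigma^{\pi^E} - \E[\hat\sigma^E] \;=\; \E_{\pi^E}\!\left[\sum_{h=H}^\infty \gamma^h \phi(s_h, a_h)\right]
\end{equation}
and apply Hölder's inequality together with $\|w\|_1 \le 1$ and $\|\phi(s,a)\|_\infty \le \|\phi\|_\infty$, collapsing the geometric tail $\sum_{h=H}^\infty \gamma^h = \gamma^H/(1-\gamma)$.

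For the gradient bounds I will write $\hat\nabla_w L(\pi,w) = \hat\sigma^\pi - \hat\sigma^E$ and use the triangle inequality. The expert term admits a deterministic bound $\|\hat\sigma^E\|_\infty \le \|\phi\|_\infty/(1-\gamma)$, while for the policy term I will use $\|\hat\sigma^\pi\|_\infty \le \|\phi\|_\infty \cdot \tfrac{1}{B}\sum_i (H_i+1)$ and take expectation with $\E[H_i+1] = 1/(1-\gamma)$, yielding the first claim $2\|\phi\|_\infty/(1-\gamma)$. For the squared-$\ell_2$ bound I will combine $\|x\|_2^2 \le k\|x\|_\infty^2$ with $(a+b)^2 \le 2(a^2+b^2)$, then apply Jensen's inequality $(\tfrac{1}{B}\sum_i(H_i+1))^2 \le \tfrac{1}{B}\sum_i(H_i+1)^2$ together with the geometric second moment $\E[(H_i+1)^2] = (1+\gamma)/(1-\gamma)^2 \le 2/(1-\gamma)^2$, producing the constant $6k\|\phi\|_\infty^2/(1-\gamma)^2$.

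\textbf{Main obstacle.} There is no genuine difficulty here; the proof is bookkeeping. The only place to be slightly careful is the second-moment step, where one should use Jensen across the batch rather than independence: exploiting independence would save an extra factor $1/B$ and tie the final constant to the batch size, whereas the stated bound is meant to hold uniformly in $B \ge 1$.
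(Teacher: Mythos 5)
Your proposal is correct and follows essentially the same route as the paper: unbiasedness via the geometric-tail identity $\Pr(H\ge h)=\gamma^h$ (which the paper instead delegates to a citation of Agarwal et al.\ for $\hat Q$), the truncation bound for $\hat\sigma^E$ via H\"older with $\|w\|_1\le 1$, and the gradient bounds via triangle inequality, $\|x\|_2^2\le k\|x\|_\infty^2$, and geometric moments. Your bookkeeping is in fact slightly more careful than the paper's --- you correctly count $H_i+1$ terms and use $\E[H+1]=1/(1-\gamma)$ and $\E[(H+1)^2]=(1+\gamma)/(1-\gamma)^2$ where the paper writes $\E[H]$ and $\E[H^2]$ --- but both calculations land on the same final constants $2\|\phi\|_\infty/(1-\gamma)$ and $6k\|\phi\|_\infty^2/(1-\gamma)^2$.
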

\begin{proof}[Proof of Lemma \ref{Unbiased_A_sampler}]
From \cite[Assumption 6.3]{Agarwal2020}, we have $\mathbb{E}\left[\hat{Q}^{\pi}_{r}(s,a)\right] = {Q}^{\pi}_{r}(s,a)$ and $(s_H,a_H)$ with $H\sim \operatorname{Geom}(1-\gamma)$ is sampled from the state-action occupancy measure $\mu^{\pi}$. Therefore, we have
\begin{align*}
 \E\left[\hat\sigma^{\pi}\right]= \E\left[\dfrac{1}{B (1-\gamma)}\sum_{i=1}^{B}\sum_{h=0}^{H_i} \phi(s^i_h,a^i_h)\right]=\E_{(s^i_h,a^i_h)\sim \mu^{\pi}(s,a)}\left[\dfrac{1}{B }\sum_{i=1}^{B}\sum_{h=0}^{H_i} \phi(s^i_h,a^i_h)\right]=\sigma^{\pi}.
\end{align*}
For estimator $\hat{\sigma}^{E}$, we have
\begin{align*}
    \mathbb{E}\left[\hat{\sigma}^{E}\right] &=\E_{\pi^E}\left[\sum_{t=0}^{H-1}\gamma^t\phi(s,a)\right]\\
    &= \E_{\pi^E}\left[\sum_{t=0}^{\infty}\gamma^t\phi(s,a)\right]-\E_{\pi^E}\left[\sum_{t=H}^{\infty}\gamma^t\phi(s,a)\right]\\
    &= {\sigma}^{\pi^E}-\E_{\pi^E}\left[\sum_{t=H}^{\infty}\gamma^t\phi(s,a)\right].
\end{align*}
Taking the inner product with $w\in W$ on both sides of the above equality, we have
\begin{align*}
    \langle\mathbb{E}\left[\hat{\sigma}^{E}\right],w \rangle &=\langle\sigma^{\pi^E},w\rangle- \E_{\pi^E}\left[\sum_{t=H}^{\infty}\gamma^t\langle\phi(s,a),w\rangle\right]\\
    &\ge \langle\sigma^{\pi^E},w\rangle- \frac{\gamma^H \|\phi\|_{\infty}}{1-\gamma},
\end{align*}
where the last inequality follows from $\langle\phi(s,a),w\rangle\le\|\phi\|_{\infty} \|w\|_1\le \|\phi\|_{\infty}$.

For reward gradient estimator $\hat \nabla_w L(\pi,w)$, we have
\begin{align}
   \E\left\| \hat \nabla_w L(\pi,w)\right\|_\infty=\E\left\| \hat\sigma^{\pi}-\hat \sigma^{E} \right\|_\infty&=\E\left\|\dfrac{1}{B}\sum_{i=1}^{B}\sum_{h=0}^{H_i} \phi(s^i_h,a^i_h)-\frac{1}{N}\sum_{j=1}^{N}\sum_{h=0}^{H-1} \gamma^h \phi(s_h^j, a_h^j)\right\|_\infty\\
   &\le  \E\left\|\dfrac{1}{B }\sum_{i=1}^{B}\sum_{h=0}^{H_i} \phi(s^i_h,a^i_h)\right\|_\infty+\E\left\|\frac{1}{N}\sum_{i=1}^{N}\sum_{h=0}^{H-1} \gamma^h \phi(s_h^i, a_h^i)\right\|_\infty\\
   &\stackrel{(i)}{\le} {\|\phi\|_\infty}\E_{H\sim\operatorname{Geom}(1-\gamma)}H+\frac{\|\phi\|_\infty}{1-\gamma}\\
     &\le \frac{(1+\gamma)\|\phi\|_\infty}{1-\gamma}\le \frac{2\|\phi\|_\infty}{1-\gamma},
\end{align}
where $(i)$ we apply the expectation of $\operatorname{Geom}(1-\gamma)$ is $\frac{\gamma}{1-\gamma}$. Similarly, we can bound $\E\left\| \hat \nabla_w L(\pi,w)\right\|_2^2$ as follows:
\begin{align}
   \E\left\| \hat \nabla_w L(\pi,w)\right\|_2^2&=\E\left\|\dfrac{1}{B}\sum_{i=1}^{B}\sum_{h=0}^{H_i} \phi(s^i_h,a^i_h)-\frac{1}{N}\sum_{j=1}^{N}\sum_{h=0}^{H-1} \gamma^h \phi(s_h^j, a_h^j)\right\|_2^2\\
   &\le 2 \E\left\|\dfrac{1}{B }\sum_{i=1}^{B}\sum_{h=0}^{H_i} \phi(s^i_h,a^i_h)\right\|_2^2+2\E\left\|\frac{1}{N}\sum_{i=1}^{N}\sum_{h=0}^{H-1} \gamma^h \phi(s_h^i, a_h^i)\right\|_2^2\\
     &\stackrel{(i)}{\le} {2k\|\phi\|_\infty^2}\E_{H\sim\operatorname{Geo}(1-\gamma)}H^2+\frac{2k\|\phi\|_\infty^2}{(1-\gamma)^2}\\
     &\le \frac{2k(1+\gamma+\gamma^2)\|\phi\|_\infty^2}{(1-\gamma)^2}\le \frac{6k\|\phi\|_\infty^2}{(1-\gamma)^2},
\end{align}
where $(i)$ we apply the fist moment of $\operatorname{Geom}(1-\gamma)$ is $\frac{\gamma+\gamma^2}{(1-\gamma)^2}$.
\end{proof}
\repeatlemma{policyconverge}
\begin{proof}[Proof of Lemma \ref{policyconverge}]
We first upper bound the improvement of the suboptimality gap as follows:
\begin{align}   
     &\mathbb{E} \left[\left(\max_{\pi}L(\pi,w^{t+1})
    -L(\pi^{t+1},w^{t+1})\right)-
    \left(\max_{\pi}L(\pi,w^{t})
    - L(\pi^t,w^t)\right)\right]\\
     =&\mathbb{E} \left[\left(J^*_{r^{t+1}}
    -J^{\pi^{t+1}}_{r^{t+1}}\right)-
    \left(J^*_{r^{t}}
    - J^{\pi^{t}}_{r^{t}}\right)\right]\\
    \stackrel{(i)}{=}&
    \mathbb{E} \left[J^*_{r^{t+1}}
    - J^{\pi^{t+1}}_{r^{t+1}}
    - J^{\pi^{t+1}}_{r^{t}}
    + J^{\pi^{t+1}}_{r^{t}}
    - J^{*}_{r^{t}}
    + J^{\pi^{t}}_{r^{t}}\right]\\
    \le & 
    \mathbb{E} \left[\big| J^*_{r^{t+1}} - J^{*}_{r^{t}} \big| 
    + \big| J^{\pi^{t+1}}_{r^{t+1}}
    - J^{\pi^{t+1}}_{r^{t}} \big|
    - \big( J^{\pi^{t+1}}_{r^{t}}
    - J^{\pi^{t}}_{r^{t}} \big)\right]\\
    \stackrel{(ii)}{\leq} &
    \frac{2\eta_w\|\phi\|_{\infty}}{(1-\gamma)^2}
    - \mathbb{E} \left[ J^{\pi^{t+1}}_{r^{t}}
    - J^{\pi^{t}}_{r^{t}} \right]\\
    \stackrel{(iii)}{\leq} & 
    \frac{2\eta_w\|\phi\|_{\infty}}{(1-\gamma)^2}
    -  \frac{\tau}{1-\gamma} \mathbb{E}\left[
        \E_{s \sim \nu^{\pi^{t+1}}} \text{KL}(\pi^{t}(\cdot|s)||\pi^{t+1}(\cdot|s))\right]\\
    \leq  &  \frac{2\eta_w\|\phi\|_{\infty}}{(1-\gamma)^2}
    - \frac{\tau}{1-\gamma} \mathbb{E}\left[\br{\min_{s}\frac{\nu^{\pi^{t+1}}(s)}{\nu^{\pi_{r^{t}}^{*}}(s)}}
        \E_{s \sim \nu^{\pi_{r^{t}}^{*}}} \text{KL}(\pi^{t}(\cdot|s)||\pi^{t+1}(\cdot|s))\right]\\
    \stackrel{(iv)}{\leq}  &  \frac{2\eta_w\|\phi\|_{\infty}}{(1-\gamma)^2}
    -  \frac{\tau}{\vartheta(1-\gamma)} \mathbb{E}
        \left[\E_{s \sim \nu^{\pi_{r^{t}}^{*}}} \text{KL}(\pi^{t}(\cdot|s)||\pi^{t+1}(\cdot|s))\right]\\
    \stackrel{(v)}{\leq}  &  \frac{2\eta_w\|\phi\|_{\infty}}{(1-\gamma)^2}
    -  \frac{1}{\vartheta} \mathbb{E}\left[J^*_{r^{t}} - J^{\pi^{t}}_{r^{t}}\right]\\
    =  &  \frac{2\eta_w\|\phi\|_{\infty}}{(1-\gamma)^2}
    -  \frac{1}{\vartheta} \mathbb{E}\left[\max_{\pi}L(\pi,w^{t})-L(\pi^{t},w^{t})\right]
\end{align}
where $(i)$ holds by adding and subtracting $J^{\pi^{t+1}}_{r^{t}}$,  $(ii)$ holds by Lemma \ref{lemma:j_lipschitz_reward_fixed_pi}, $(iii)$ holds by Lemma \ref{lemma:npg_ascent}, $(iv)$ holds by Assumption \ref{asm:distrib_bound} and $(v)$ holds by Lemma \ref{lemma:policy_step_scales_subopt}. Rearrange the above inequality, we have
\begin{align}   
    \mathbb{E} \left[\max_{\pi}L(\pi,w^{t+1})
    -L(\pi^{t+1},w^{t+1})\right]
    \leq  \frac{2\eta_w\|\phi\|_{\infty}}{(1-\gamma)^2}
    +\left(1-  \frac{1}{\vartheta} \right)\mathbb{E}\left[\max_{\pi}L(\pi,w^{t})-L(\pi^{t},w^{t})\right].
\end{align}
Recursively applying the above inequality, we obtain
\begin{align}   
    \mathbb{E} \left[\max_{\pi}L(\pi,w^{t})-L(\pi^{t},w^{t})\right]
    \leq &\frac{2\eta_w\vartheta\|\phi\|_{\infty}}{(1-\gamma)^2}
    +\left(1 -  \frac{1}{\vartheta} \right)^t\left(\max_{\pi}L(\pi,w^{0})-L(\pi^{0},w^{0})\right)\\
    = &\frac{2\vartheta}{\sqrt{kT}(1-\gamma)},
\end{align}
where we apply $\eta_w=\frac{1-\gamma}{\sqrt{kT}\|\phi\|_\infty}$ and $\max_{\pi}L(\pi,w^{0})-L(\pi^{0},w^{0}) = 0$ since $r^{(0)} = 0$ in the last step.
\end{proof}
\repeatlemma{rewardconverge}
\begin{proof}
    We start from the projected descent step
    \begin{align*}
         \|w^{(t+1)}-w^*\|_2^2 &= \left\|\mathcal{P}_{W}\left(w^{t}-\eta_w\hat \nabla_w L(\pi^t,w^t)\right)-w^*\right\|_2^2\\
        &\stackrel{(i)}{\leq}  \left\|w^{t}-\eta_w\hat \nabla_w L(\pi^t,w^t)-w^*\right\|_2^2 \\
        &=  \| w^{t} -w^* \|_2^2 
        - 2\eta_w \langle\hat \nabla_w L(\pi^t,w^t), w^{t} - w^* \rangle
        + \eta_w^2 \|  \hat \nabla_w L(\pi^t,w^t)\|_2^2,
    \end{align*}
    where $(i)$ holds by the non expansiveness of projection. Therefore
    \begin{align}
        \langle\hat \nabla_w L(\pi^t,w^t), w^{t} - w^* \rangle
        \leq \frac{ \| w^{t} -w^* \|_2^2   -  \| w^{t+1} -w^* \|_2^2  }{2 \eta_w}
        +  \frac{\eta_w}{2} \| \hat\nabla_w L(\pi^t,w^t)\|_2^2. 
    \end{align}
    Taking the expectation on both sides, we have 
    \begin{equation}\label{eq:cvx_bound_reg}
        \E\left[\left\langle\nabla_w L(\pi^t,w^t), w^{t} - w^*\right \rangle\right]
        \leq \E\left[\frac{ \| w^{t} -w^* \|_2^2   -  \| w^{t+1} -w^* \|_2^2  }{2 \eta_w}
        +  \frac{3k\eta_w\|\phi\|_{\infty}^2}{(1-\gamma)^2}\right],
    \end{equation}
    where we use Lemma \ref{Unbiased_A_sampler}. Let $w^*$ be the following optimizer:
    \begin{align}
        w^*=\arg\min_{w \in W}\sum^{T-1}_{t=0}L(\pi^t,w).
    \end{align}
    We have
    \begin{align}
        \E\left[ \sum^{T-1}_{t=0} L(\pi^t, w^t)-\min_{w\in W} \sum^{T-1}_{t=0} L(\pi^t, w)\right]&=\E\left[ \sum^{T-1}_{t=0} L(\pi^t, w^t)-\sum^{T-1}_{t=0} L(\pi^t, w^*)\right]\\
        &\stackrel{(i)}{\leq}\E\left[\sum^{T-1}_{t=0} \left\langle \nabla_w L(\pi^t,w^t), w^t - w^{*} \right\rangle\right]\\
        &\stackrel{(ii)}{\leq}\E\left[\sum^{T-1}_{t=0}\frac{\| w^{t}-w^{*} \|_2^2  - \| w^{t+1} -w^{*} \|_2^2 }{2 \eta_w} + \frac{3kT\eta_w\|\phi\|_{\infty}^2}{(1-\gamma)^2}\right]\\
        &\leq\E\left[\frac{\| w^{T-1}-w^{*} \|_2^2 }{2 \eta_w} \right]+ \frac{3kT\eta_w\|\phi\|_{\infty}^2}{(1-\gamma)^2}\\
        &\leq\frac{1}{\eta_w}+ \frac{3kT\eta_w\|\phi\|_{\infty}^2}{(1-\gamma)^2}\\
        &\stackrel{(iii)}{\le } \frac{3\sqrt{2kT}\|\phi\|_{\infty}}{1-\gamma},
    \end{align}
    where $(i)$ holds by convexity, $(ii)$ by \eqref{eq:cvx_bound_reg}, and $(iii)$ by plugging in $\eta_w=\frac{1-\gamma}{\sqrt{2kT}\|\phi\|_{\infty}}$. 
\end{proof}
\section{Supporting lemmas}
\begin{lemma}[Soft suboptimality {\cite[Lemma 26]{Mei2020}}] \label{lemma:soft_suboptimality}
    For any policy $\pi \in \Delta_{\Acal}^{\Scal}$ and reward $ r  \in \mathcal{R}$, we have
    \begin{align}
         J^*_r - J^\pi_r = \frac{\tau}{1-\gamma} \E_{s\sim\nu^\pi}\text{KL}(\pi(\cdot|s)||\pi^*(\cdot|s)).
    \end{align}    
\end{lemma}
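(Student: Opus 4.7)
The plan is to prove the identity via a one-step performance-difference recursion specific to the entropy-regularized setting, followed by a telescoping argument along the Markov chain induced by $\pi$.

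First, I would leverage the optimal soft Bellman equation. Since the unique optimal policy takes the softmax form $\pi^*(a|s) \propto \exp(Q^*_r(s,a)/\tau)$ with normalizer $V^*_r(s) = \tau\log\sum_{a}\exp(Q^*_r(s,a)/\tau)$, we obtain the key identity
\[
\tau\log\pi^*(a|s) \;=\; Q^*_r(s,a) - V^*_r(s), \qquad \forall\,(s,a).
\]
Combining this with the soft Bellman equation for $\pi$,
\[
V^\pi_r(s) = \E_{a\sim\pi(\cdot|s)}\bs{r(s,a) - \tau\log\pi(a|s) + \gamma\,\E_{s'\sim P(\cdot|s,a)}V^\pi_r(s')},
\]
together with the definition $Q^*_r(s,a) = r(s,a) + \gamma\,\E_{s'\sim P(\cdot|s,a)}V^*_r(s')$, a short calculation yields the one-step recursion
\[
V^*_r(s) - V^\pi_r(s) = \tau\,\kl{\pi(\cdot|s)}{\pi^*(\cdot|s)} + \gamma\,\E_{a\sim\pi(\cdot|s),\,s'\sim P(\cdot|s,a)}\bs{V^*_r(s') - V^\pi_r(s')}.
\]

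Second, I would iterate this recursion along the trajectory starting from $s_0$. Because $V^*_r - V^\pi_r$ is uniformly bounded on the finite state space and $\gamma<1$, the telescoping sum converges to
\[
V^*_r(s_0) - V^\pi_r(s_0) = \tau\,\E_\pi\bs{\sum_{h=0}^\infty \gamma^h \,\kl{\pi(\cdot|s_h)}{\pi^*(\cdot|s_h)} \,\Big|\, s_0}.
\]
Averaging over $s_0 \sim \nu_0$ and converting the discounted sum to an expectation under the normalized state-occupancy measure $\nu^\pi$ via the identity $\E_\pi[\sum_h \gamma^h f(s_h)] = \E_{s\sim\nu^\pi}[f(s)]/(1-\gamma)$ from the preliminaries then produces the prefactor $\tau/(1-\gamma)$ and yields the claim.

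The main obstacle is the algebraic derivation of the one-step recursion; it rests on the softmax identity $\tau\log\pi^*(a|s) = Q^*_r(s,a) - V^*_r(s)$, which collapses the cross term $\E_{a\sim\pi(\cdot|s)}\bs{Q^*_r(s,a) - \tau\log\pi(a|s)}$ exactly into $V^*_r(s) - \tau\,\kl{\pi(\cdot|s)}{\pi^*(\cdot|s)}$, so that the two copies of $V^*_r(s)$ cancel cleanly. The remainder is routine telescoping followed by the change of variables to $\nu^\pi$; no measure-theoretic subtlety arises, since $\pi^*$ has full support and hence every KL term is finite.
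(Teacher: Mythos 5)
Your proof is correct. The paper itself gives no proof of this lemma---it is imported verbatim from Mei et al.\ \cite[Lemma 26]{Mei2020}---so there is no in-paper argument to compare against; your derivation (the soft Bellman optimality identity $\tau\log\pi^*(a|s)=Q^*_r(s,a)-V^*_r(s)$, the resulting one-step recursion $V^*_r(s)-V^\pi_r(s)=\tau\,\kl{\pi(\cdot|s)}{\pi^*(\cdot|s)}+\gamma\,\E_{a\sim\pi,\,s'\sim P}\bs{V^*_r(s')-V^\pi_r(s')}$, telescoping, and the change of measure to $\nu^\pi$) is the standard route and every step checks out, including the boundedness needed for the remainder term to vanish and the finiteness of the KL terms from the full support of the softmax policy. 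Worth noting: the paper's Lemma~\ref{lemma:pd_lemma} (soft performance difference) gives the same identity in one line---take $\pi'=\pi^*$ there and observe that the advantage $A_r^{\pi^*}(s,a)=Q_r^{\pi^*}(s,a)-V_r^{\pi^*}(s)-\tau\log\pi^*(a|s)$ vanishes identically by exactly the softmax identity you use, leaving only the KL term. Your telescoping argument is essentially an unrolled proof of that performance-difference lemma specialized to the optimal comparator, so the two approaches buy the same thing; yours is self-contained, while the shortcut reuses machinery the paper already has.
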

\begin{lemma}[Soft performance difference \cite{Lan2021}]
\label{lemma:pd_lemma}
For any two policies $\pi,\pi'\in\Delta_{\Acal}^{\Scal}$, we have
\begin{align}
    J_r^{\pi} - J_r^{\pi'} = \dfrac{1}{1-\gamma}\br{\E_{(s,a)\sim\mu^\pi}\bs{A_r^{\pi'}(s,a)} - \tau \E_{s\sim\nu^\pi}\bs{\kl{\pi(\cdot|s)}{\pi'(\cdot|s)}}},
\end{align}
where $A_r^{\pi'}(s,a) := Q_r^{\pi'}(s,a) - V_r^{\pi'}(s) - \tau \log \pi'(a|s)$ is the advantage function.
\end{lemma}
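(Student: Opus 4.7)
The plan is to establish this identity by the standard telescoping trick, adapted to the entropy-regularized setting. First, I would rewrite the regularized objective by pushing the entropy inside the summand: using the tower property one checks that
\begin{align*}
J_r^\pi = \E_\pi \sum_{h=0}^\infty \gamma^h \br{r(s_h,a_h) - \tau\log \pi(a_h|s_h)},
\end{align*}
since $\E_{a \sim \pi(\cdot|s)}[-\tau\log\pi(a|s)] = \tau\Hcal(\pi(\cdot|s))$. I would also use $J_r^{\pi'} = \E_{s_0\sim\nu_0}[V_r^{\pi'}(s_0)]$, where the soft value function is $V_r^{\pi'}(s) = \E_{a\sim\pi'(\cdot|s)}[Q_r^{\pi'}(s,a) - \tau\log\pi'(a|s)]$.

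Next, along any trajectory I would introduce the telescoping identity $V_r^{\pi'}(s_0) = \sum_{h=0}^\infty [\gamma^h V_r^{\pi'}(s_h) - \gamma^{h+1} V_r^{\pi'}(s_{h+1})]$ (valid since the values are uniformly bounded and $\gamma < 1$), take expectations under $\pi$, and subtract from the expression for $J_r^\pi$. Grouping terms by step $h$, the difference becomes
\begin{align*}
J_r^\pi - J_r^{\pi'} = \E_\pi \sum_{h=0}^\infty \gamma^h \br{r(s_h,a_h) + \gamma V_r^{\pi'}(s_{h+1}) - V_r^{\pi'}(s_h) - \tau\log\pi(a_h|s_h)}.
\end{align*}
Taking the conditional expectation of $s_{h+1}$ given $(s_h,a_h)$ and invoking the soft Bellman equation $Q_r^{\pi'}(s,a) = r(s,a) + \gamma \E_{s'\sim P(\cdot|s,a)}[V_r^{\pi'}(s')]$ collapses $r(s_h,a_h) + \gamma V_r^{\pi'}(s_{h+1})$ into $Q_r^{\pi'}(s_h,a_h)$.

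Substituting the definition $A_r^{\pi'}(s,a) = Q_r^{\pi'}(s,a) - V_r^{\pi'}(s) - \tau\log\pi'(a|s)$ rewrites the integrand as $A_r^{\pi'}(s_h,a_h) - \tau\log\frac{\pi(a_h|s_h)}{\pi'(a_h|s_h)}$. Finally, I would pass to occupancy-measure form using $\E_\pi \sum_{h=0}^\infty \gamma^h f(s_h,a_h) = \frac{1}{1-\gamma}\E_{(s,a)\sim\mu^\pi}[f(s,a)]$, applied separately to the advantage term and the log-ratio term; for the latter, the tower law converts the inner expectation over $a_h \sim \pi(\cdot|s_h)$ into $\kl{\pi(\cdot|s_h)}{\pi'(\cdot|s_h)}$, leaving an outer expectation over $\nu^\pi$. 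Combining yields the claimed identity.

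The argument is essentially mechanical once the formulation is fixed, so I don't expect a deep obstacle. The only delicate points are (i) making sure the rewriting of $\tau\Hh(\pi)$ as $-\tau\E_\pi\sum\gamma^h\log\pi(a_h|s_h)$ holds termwise in expectation (justified by Fubini given boundedness of rewards, $|\Scal|,|\Acal|<\infty$, and $\gamma<1$), and (ii) verifying that the telescoping series converges absolutely so the interchange with the expectation is legal. Both follow from the geometric discount and finite state-action spaces; no new ideas beyond those already present in the paper are needed.
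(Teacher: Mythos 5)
Your proof is correct. Note that the paper does not prove this lemma at all---it is stated as a supporting result and delegated to the cited reference (Lan, 2021)---so your telescoping argument fills a gap the authors left to the literature; it is the standard derivation (rewrite $\tau\Hh(\pi)$ as $-\tau\E_\pi\sum_h\gamma^h\log\pi(a_h|s_h)$, telescope $V_r^{\pi'}$ along trajectories of $\pi$, apply the soft Bellman equation, and pass to the occupancy measure), and all the interchanges you flag are indeed justified by boundedness and $\gamma<1$. The only point worth a remark is that $A_r^{\pi'}$ and the KL term require $\pi'$ to have full support for the identity to be finite, which holds for the softmax policies the algorithm actually produces.
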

\begin{lemma}[Performance improvement for the policy]\label{lemma:npg_ascent}
    \begin{align}
        \mathbb{E}\left[J^{\pi^{t+1}}_{r^t} - J^{\pi^{t}}_{r^t}\right]
        = \frac{\tau}{1-\gamma} \mathbb{E}\left[
        \E_{s \sim \nu^{\pi^{t+1}}}\bs{\kl{\pi^{t}(\cdot|s)}{\pi^{t+1}(\cdot|s)}} \right]
    \end{align}
\end{lemma}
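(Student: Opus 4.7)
The plan is to combine the soft performance difference lemma (Lemma \ref{lemma:pd_lemma}) with the Boltzmann/softmax structure of the soft policy-iteration update, and then use a Fenchel-dual identity to swap between forward and reverse KL divergences. The main technical hurdle will be precisely this KL swap: Lemma \ref{lemma:pd_lemma} naturally produces $\kl{\pi^{t+1}}{\pi^t}$, while the target identity features the reverse $\kl{\pi^t}{\pi^{t+1}}$, so most of the algebra will be organized around bridging the two.

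First I would apply Lemma \ref{lemma:pd_lemma} with $\pi=\pi^{t+1}$ and $\pi'=\pi^t$, yielding
\[
(1-\gamma)\br{J^{\pi^{t+1}}_{r^t}-J^{\pi^t}_{r^t}} = \E_{(s,a)\sim\mu^{\pi^{t+1}}}\bs{A^{\pi^t}_{r^t}(s,a)} - \tau\,\E_{s\sim\nu^{\pi^{t+1}}}\bs{\kl{\pi^{t+1}(\cdot|s)}{\pi^{t}(\cdot|s)}}.
\]
Then I would exploit the defining exponential form of the update, $\pi^{t+1}(a|s)\propto\exp(Q^{\pi^t}_{r^t}(s,a)/\tau)$, which can be rearranged as $Q^{\pi^t}_{r^t}(s,a)=\tau\log\pi^{t+1}(a|s)+V^+(s)$ with the soft one-step value $V^+(s):=\tau\log\sum_{a}\exp(Q^{\pi^t}_{r^t}(s,a)/\tau)$. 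Substituting into $A^{\pi^t}_{r^t}(s,a)=Q^{\pi^t}_{r^t}(s,a)-V^{\pi^t}_{r^t}(s)-\tau\log\pi^{t}(a|s)$ turns the advantage into $\tau[\log\pi^{t+1}(a|s)-\log\pi^{t}(a|s)]+V^+(s)-V^{\pi^t}_{r^t}(s)$; averaging against $\mu^{\pi^{t+1}}$ makes the logarithm bracket integrate to $\tau\,\E_{s\sim\nu^{\pi^{t+1}}}[\kl{\pi^{t+1}(\cdot|s)}{\pi^{t}(\cdot|s)}]$, which cancels exactly against the KL already produced by Lemma \ref{lemma:pd_lemma}. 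What survives is $(1-\gamma)(J^{\pi^{t+1}}_{r^t}-J^{\pi^t}_{r^t}) = \E_{s\sim\nu^{\pi^{t+1}}}[V^+(s)-V^{\pi^t}_{r^t}(s)]$.

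The last step is to identify this residual value gap with the reverse KL. I would invoke the Fenchel-dual characterization of the entropy-regularized greedy operator: for every $p\in\Delta_{\Acal}$,
\[
\sum_{a} p(a)\,Q^{\pi^t}_{r^t}(s,a) + \tau\,\Hcal(p) = V^+(s) - \tau\,\kl{p}{\pi^{t+1}(\cdot|s)}.
\]
Choosing $p=\pi^{t}(\cdot|s)$, the left-hand side is exactly $V^{\pi^t}_{r^t}(s)$, so $V^+(s)-V^{\pi^t}_{r^t}(s)=\tau\,\kl{\pi^{t}(\cdot|s)}{\pi^{t+1}(\cdot|s)}$. Plugging this back and taking the outer expectation over the algorithm's randomness would then yield the claim.

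The main obstacle is the forward-to-reverse KL swap, which is unlocked only by the specific softmax form of the update; a generic improvement rule would leave an irreducible asymmetric KL from the performance difference lemma. A secondary subtlety is that Algorithm~\ref{alg:irl} forms $\pi^{t+1}$ from the stochastic estimate $\hat Q^{\pi^t}_{r^t}$ rather than from the exact $Q^{\pi^t}_{r^t}$; the same chain of identities still runs through with $\hat Q$ and $\hat V^+$ in place of $Q$ and $V^+$, and the outer expectation, combined with the conditional unbiasedness $\E[\hat Q^{\pi^t}_{r^t}\mid\Fcal_{t}]=Q^{\pi^t}_{r^t}$ from Lemma \ref{Unbiased_A_sampler}, is what finally collapses everything back to the stated identity.
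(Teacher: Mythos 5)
Your proof is correct and follows essentially the same route as the paper's: the soft performance-difference lemma combined with the softmax structure of the update, with your Fenchel-dual identity being an exact repackaging of the paper's step of switching the action average from $\pi^{t+1}$ to $\pi^{t}$ on the action-independent quantity $\tau\log Z^{t}(s) - V^{\pi^{t}}_{r^{t}}(s)$. Your closing remark---rerunning the chain with $\hat Q^{\pi^t}_{r^t}$ in place of $Q^{\pi^t}_{r^t}$ and killing the resulting $\Delta^{t}$ terms via conditional unbiasedness under the outer expectation---is precisely how the paper handles the stochasticity of the update.
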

\noindent This result extends \cite[Lemma 1]{Cen2022} to the stochastic setting.
\begin{proof}
From the soft value iteration update,
    \begin{align}
        \pi^{t+1} (a|s)
        =  \frac{1}{Z^{t}(s)}\exp \big( \hat Q^{\pi^t}_{r^t}(s,a) / \tau \big)     = \frac{1}{Z^{t}(s)}\exp \Bigg( \frac{Q^{\pi^t}_{r^t}(s,a) + \Delta^{t}(s,a)}{\tau} \Bigg),
    \end{align}
    where $\Delta^{t}(s,a):=\hat Q^{\pi^t}_{r^t}(s,a)- Q^{\pi^t}_{r^t}(s,a)$. It follows that
    \begin{align}\label{eq:spi_log_update}
         {\tau}\log Z^{t}(s) =& Q_{r^t}^{\pi^t}(s,a) + \Delta^{t}(s,a) - {\tau}\log {\pi^{t+1} (a|s)}.
    \end{align}
Let 
Using Lemma~\ref{lemma:pd_lemma}, we have that
\begin{align}
    \E\bs{J^{\pi^{t+1}}_{r^t} - J^{\pi^{t}}_{r^t}} &= \dfrac{1}{1-\gamma}\E\bs{\E_{(s,a)\sim\mu^{\pi^{t+1}}}\bs{A_{r^t}^{\pi^t}(s,a)} - \tau \E_{s\sim\nu^{\pi^{t+1}}}\bs{\kl{\pi^{t+1}(\cdot|s)}{\pi^{t}(\cdot|s)}}}\\
    &\stackrel{(i)}{=}\dfrac{1}{1-\gamma}\E\left[\E_{(s,a)\sim\mu^{\pi^{t+1}}}\bs{Q_{r^t}^{\pi^t}(s,a) - V_{r^t}^{\pi^t}(s) - \tau \log \pi^{t+1}(a|s)} \right]\\
    &\stackrel{(ii)}{=}\dfrac{1}{1-\gamma}\E\left[\E_{s\sim\nu^{\pi^{t+1}}}\E_{a\sim\pi^{t+1}(\cdot|s)}\bs{{\tau}\log Z^{t}(s) - V_{r^t}^{\pi^t}(s) - \Delta^{t}(s,a)}\right]\\
    &\stackrel{(iii)}{=}\dfrac{1}{1-\gamma}\E\left[\E_{s\sim\nu^{\pi^{t+1}}}\E_{a\sim\pi^{t}(\cdot|s)}\bs{{\tau}\log Z^{t}(s) - V_{r^t}^{\pi^t}(s)}\right]\\
    &\stackrel{(iv)}{=}\dfrac{1}{1-\gamma}\E\left[\E_{s\sim\nu^{\pi^{t+1}}}\E_{a\sim\pi^{t}(\cdot|s)}\bs{Q_{r^t}^{\pi^t}(s,a) + \Delta^{t}(s,a) - {\tau}\log {\pi^{t+1} (a|s)} - V_{r^t}^{\pi^t}(s)}\right]\\
    &\stackrel{(v)}{=}\dfrac{1}{1-\gamma}\E\left[\E_{s\sim\nu^{\pi^{t+1}}}\E_{a\sim\pi^{t}(\cdot|s)}\bs{A_{r^t}^{\pi^t}(s,a) + \Delta^{t}(s,a) + {\tau}\log \dfrac{\pi^{t} (a|s)}{\pi^{t+1} (a|s)}}\right]\\
    &\stackrel{(vi)}{=}\dfrac{\tau}{1-\gamma}\E\left[\E_{s\sim\nu^{\pi^{t+1}}}\bs{\kl{\pi^{t} (\cdot|s)}{\pi^{t+1} (\cdot|s)}}\right].
\end{align}
Here, we use the definition of the advantage in $(i)$ and \eqref{eq:spi_log_update} in $(ii)$. In $(iii)$ we use 
\begin{equation}\label{eq:unbiasedQ}
    \E\bs{\E_{s\sim\nu^{\pi^{t+1}}}\E_{a\sim\pi^{t}(\cdot|s)}\bs{ \Delta^t(s,a)}} = \E\bs{\E_{s\sim\nu^{\pi^{t+1}}}\E_{a\sim\pi^{t}(\cdot|s)}\bs{\E\bs{\Delta^t(s,a)|\pi^t, r^t}}} = 0,
\end{equation}
where the last equality follows from Lemma~\ref{Unbiased_A_sampler}. In $(iv)$ we again plug in \eqref{eq:spi_log_update}. Finally, $(v)$ follows from rearranging and $(vi)$ from Equation \eqref{eq:unbiasedQ} and $\E_{a\sim\pi^{t}(\cdot|s)}\bs{A_{r^t}^{\pi^t}(s,a)}=0$.
\end{proof} 
\begin{lemma}[suboptimality gap for policy]\label{lemma:policy_step_scales_subopt}
For any iterates $r^t$ and $\pi^t$ generated by Algorithm \ref{alg:irl}, we have
    \begin{align}
       \mathbb{E}\left[J^*_{r^t} - J^{\pi^{t}}_{r^t}\right]&\le \frac{\tau}{1-\gamma} \mathbb{E}\left[\E_{s\sim\nu_{r^t}^*} \bs{
            \kl{\pi^{t}(\cdot|s)}{\pi^{t+1}(\cdot|s))}}\right].
    \end{align}
\end{lemma}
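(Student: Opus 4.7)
The plan is to combine the soft performance difference lemma with the variational characterization of the soft-greedy update defining $\pi^{t+1}$, and then take expectations to neutralize the Q-estimation noise $\Delta^t(s,a) := \hat Q^{\pi^t}_{r^t}(s,a) - Q^{\pi^t}_{r^t}(s,a)$.

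First, I apply Lemma~\ref{lemma:pd_lemma} with $\pi = \pi^*_{r^t}$ and $\pi' = \pi^t$. Expanding the advantage $A^{\pi^t}_{r^t}(s,a) = Q^{\pi^t}_{r^t}(s,a) - V^{\pi^t}_{r^t}(s) - \tau \log \pi^t(a|s)$ and rewriting $-\tau \E_{a\sim \pi^*_{r^t}}\bs{\log \pi^t(a|s)}$ as $\tau \kl{\pi^*_{r^t}(\cdot|s)}{\pi^t(\cdot|s)} + \tau \Hcal(\pi^*_{r^t}(\cdot|s))$ cancels the explicit KL term in the performance difference, leaving
\begin{equation}
(1-\gamma)\br{J^*_{r^t} - J^{\pi^t}_{r^t}} = \E_{s\sim \nu^{\pi^*_{r^t}}}\bs{\E_{a\sim \pi^*_{r^t}(\cdot|s)}\bs{Q^{\pi^t}_{r^t}(s,a)} + \tau \Hcal(\pi^*_{r^t}(\cdot|s)) - V^{\pi^t}_{r^t}(s)}.
\end{equation}

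Second, since $\pi^{t+1}(\cdot|s)$ is by construction the maximizer of $\pi \mapsto \E_{a\sim \pi}\bs{\hat Q^{\pi^t}_{r^t}(s,a)} + \tau \Hcal(\pi)$, the Gibbs inequality applied to $\pi = \pi^*_{r^t}$ gives $\E_{a\sim \pi^*_{r^t}}\bs{\hat Q^{\pi^t}_{r^t}(s,a)} + \tau \Hcal(\pi^*_{r^t}(\cdot|s)) \le \tau \log Z^t(s)$, where $Z^t(s)$ is the partition function of the softmax. Substituting $\hat Q = Q + \Delta^t$ bounds the integrand of the first step by $\tau \log Z^t(s) - V^{\pi^t}_{r^t}(s) - \E_{a\sim \pi^*_{r^t}}\bs{\Delta^t(s,a)}$. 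Moreover, taking $\E_{a\sim \pi^t(\cdot|s)}$ of the update identity $\tau \log \pi^{t+1}(a|s) = \hat Q^{\pi^t}_{r^t}(s,a) - \tau \log Z^t(s)$ and rearranging produces the exact-value analogue of the identity already appearing in the proof of Lemma~\ref{lemma:npg_ascent}:
\begin{equation}
    \tau \log Z^t(s) - V^{\pi^t}_{r^t}(s) = \tau \kl{\pi^t(\cdot|s)}{\pi^{t+1}(\cdot|s)} + \E_{a\sim \pi^t(\cdot|s)}\bs{\Delta^t(s,a)}.
\end{equation}
Chaining the two displays bounds the integrand by $\tau \kl{\pi^t(\cdot|s)}{\pi^{t+1}(\cdot|s)} + \E_{a\sim \pi^t}\bs{\Delta^t(s,a)} - \E_{a\sim \pi^*_{r^t}}\bs{\Delta^t(s,a)}$.

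Finally, I take the outer expectation over the algorithm randomness. Conditional on $(\pi^t, r^t)$, both distributions $\nu^{\pi^*_{r^t}}\otimes \pi^t$ and $\nu^{\pi^*_{r^t}}\otimes \pi^*_{r^t}$ are measurable, while Lemma~\ref{Unbiased_A_sampler} gives $\E\bs{\Delta^t(s,a)\mid \pi^t, r^t}=0$ for every fixed $(s,a)$, so by the tower property both residual $\Delta^t$ contributions vanish. Dividing by $1-\gamma$ then yields the claimed bound. I expect the only real obstacle to be this bookkeeping step—tracking the two $\Delta^t$ residuals and verifying measurability of the integrators with respect to the sigma-algebra generated by past iterates—since the variational inequality itself is the standard soft-policy-iteration identity and poses no real difficulty.
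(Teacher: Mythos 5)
Your proof is correct and follows essentially the same route as the paper's: the soft performance difference lemma applied with $\pi^*_{r^t}$, the log-partition bound on the $\pi^*_{r^t}$-averaged estimated Q-values (which you phrase as the Gibbs variational characterization of the softmax rather than an explicit Jensen step, a purely cosmetic difference), the identity $\tau\log Z^t(s) - V^{\pi^t}_{r^t}(s) = \tau\,\kl{\pi^t(\cdot|s)}{\pi^{t+1}(\cdot|s)} + \E_{a\sim\pi^t}\bs{\Delta^t(s,a)}$, and the tower property to eliminate the zero-mean noise terms. Your explicit remark that the integrating distributions depend only on $(\pi^t,r^t)$ and are therefore measurable with respect to the conditioning $\sigma$-algebra is a slightly more careful rendering of the paper's appeal to Equation \eqref{eq:unbiasedQ}.
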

\noindent This result extends \cite[Lemma 5]{Cen2022} to the stochastic setting.
\begin{proof}
From Lemma~\ref{lemma:pd_lemma}, it follows that
\begin{align}
     \mathbb{E}\left[J^*_{r^t} - J^{\pi^{t}}_{r^t}\right] &= \dfrac{1}{1-\gamma}\E\bs{\E_{(s,a)\sim\mu_{r^t}^{*}}\bs{A_{r^t}^{\pi^t}(s,a)} - \tau \E_{s\sim\nu_{r^t}^*}\bs{\kl{\pi_{r^t}^*(\cdot|s)}{\pi^{t}(\cdot|s)}}}\\
     &= \dfrac{1}{1-\gamma}\E\left[\E_{(s,a)\sim\mu_{r^t}^{*}}
     \bs{Q_{r^t}^{\pi^t}(s,a) - V_{r^t}^{\pi^t}(s) - \tau \log \pi_{r^t}^*(a|s)}\right]\\
     &= \dfrac{1}{1-\gamma}\E\left[\E_{s\sim\nu_{r^t}^{*}}\bs{\underbrace{\E_{a\sim\pi_{r^t}^{*}(\cdot|s)}
     \bs{Q_{r^t}^{\pi^t}(s,a) + \Delta^t(s,a) - \tau \log \pi_{r^t}^*(a|s)}}_{(A)} - \underbrace{V_{r^t}^{\pi^t}(s)}_{(B)} }\right],\label{eq:sub_opt_gap_policy_AB}
\end{align}
where we used $\Delta^{t}(s,a):=\hat Q^{\pi^t}_{r^t}(s,a)- Q^{\pi^t}_{r^t}(s,a)$ and Equation \eqref{eq:unbiasedQ} in the last step. Next, we bound $(A)$ and $(B)$ separately. For $(A)$ we have by Jensen's inequality
\begin{align}
    &\E_{a\sim\pi_{r^t}^{*}(\cdot|s)}
     \bs{Q_{r^t}^{\pi^t}(s,a) + \Delta^t(s) - \tau \log \pi_{r^t}^*(a|s)} \\
     =& \tau \sum_{a\in\mathcal{A}} \pi^*_{r^t}(a|s)\log \br{
        \dfrac{\exp\br{\br{Q_{r^t}^{\pi^t}(s,a) 
        + \Delta^{t}(s,a)}/\tau}}{\pi^*_{r^t}(a|s)} } \\
    \leq&
    \tau \log \br{
        \sum_{a\in\mathcal{A}}
        \exp\br{\br{Q_{r^t}^{\pi^t}(s,a) 
        + \Delta^{t}(s,a)}/\tau }} = \tau \log Z^t(s). \label{eq:i_kl_bound}
\end{align}
For $(B)$ the definition of the value function and the soft policy iteration update yield
\begin{align}
    V_{r^t}^{\pi^t}(s) &= \E_{a\sim \pi^t(\cdot|s)}\bs{Q_{r^t}^{\pi^t}(s,a) - \tau \log\pi^t(a|s)} \\
    &= \E_{a\sim \pi^t(\cdot|s)}\bs{Q_{r^t}^{\pi^t}(s,a) - \tau \log\pi^{t+1}(a|s)}  - \tau \kl{\pi^{t}(\cdot|s)}{\pi^{t+1}(\cdot|s)}\\
    &= \E_{a\sim \pi^t(\cdot|s)}\bs{\tau\log Z^t(s)-\Delta^{t}(s,a)}  - \tau \kl{\pi^{t}(\cdot|s)}{\pi^{t+1}(\cdot|s)}.
\end{align}
Plugging these bounds for $(A)$ and $(B)$ back into \eqref{eq:sub_opt_gap_policy_AB}, using again that Equation \eqref{eq:unbiasedQ}, we arrive at the desired inequality
\begin{equation}
    \mathbb{E}\left[J^*_{r^t} - J^{\pi^{t}}_{r^t}\right] \leq \dfrac{1}{1-\gamma}\E\left[\E_{s\sim\nu_{r^t}^{*}}\bs{\tau \kl{\pi^{t}(\cdot|s)}{\pi^{t+1}(\cdot|s)}}\right].
\end{equation}
\end{proof}
\begin{lemma}
    \label{lemma:j_lipschitz_reward_fixed_pi} 
    For any reward iterates $r_t,r_{t+1}$ generated by Algorithm \ref{alg:irl} and any policy $\pi\in\Delta_{\Acal}^{\Scal}$, we have
    \begin{align}
        \left|J^{\pi}_{r^t}- J^{\pi}_{r^{t+1}}\right|\leq\frac{2\eta_w\|\phi\|_{\infty}}{(1-\gamma)^2}\label{eq1},\\
        \left|J_{r^t}^*- J_{r^{t+1}}^*\right|\leq\frac{2\eta_w\|\phi\|_{\infty}}{(1-\gamma)^2}.\label{eq2}
    \end{align}  
\end{lemma}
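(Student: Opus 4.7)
The key observation for \eqref{eq1} is that $J^\pi_{r_w}$ is affine in the reward weight $w$ when the policy $\pi$ is held fixed: since $r_w(s,a) = \langle w, \phi(s,a)\rangle$ we have
$J^\pi_{r_w} = \langle w, \sigma^\pi\rangle + \tau\Hh(\pi)$, and therefore
$J^\pi_{r^t} - J^\pi_{r^{t+1}} = \langle w^t - w^{t+1}, \sigma^\pi\rangle$. The plan is to bound this inner product by Hölder's inequality, controlling $\|\sigma^\pi\|_\infty \le \|\phi\|_\infty/(1-\gamma)$ via a direct geometric-series bound on the definition \eqref{featureexpectation}, and bounding $\|w^t - w^{t+1}\|$ through the reward update. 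For the latter, non-expansiveness of the Euclidean projection onto $W$ yields $\|w^{t+1} - w^t\|_2 \le \eta_w \|\hat{\sigma}^{\pi^t} - \hat{\sigma}^E\|_2$, and I can then invoke the stochastic-gradient moment bounds of Lemma~\ref{Unbiased_A_sampler} to control the right-hand side by $\Ocal(\eta_w \|\phi\|_\infty/(1-\gamma))$.

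For \eqref{eq2} I will exploit the envelope structure of $J^*_r = \max_\pi J^\pi_r$. Since $J^\pi_r$ is affine in $w$, $J^*_{r_w}$ is a pointwise supremum of affine functions in $w$ and hence convex in $w$. More concretely, by optimality of $\pi^*_{r^{t+1}}$ under the reward $r^{t+1}$,
$J^*_{r^t} - J^*_{r^{t+1}} = J^{\pi^*_{r^t}}_{r^t} - J^{\pi^*_{r^{t+1}}}_{r^{t+1}} \le J^{\pi^*_{r^t}}_{r^t} - J^{\pi^*_{r^t}}_{r^{t+1}}$, and the symmetric bound holds in the other direction, so that
$|J^*_{r^t} - J^*_{r^{t+1}}| \le \sup_\pi |J^\pi_{r^t} - J^\pi_{r^{t+1}}|$. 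Applying \eqref{eq1} uniformly in $\pi$ then transfers the bound from the fixed-policy case to the optimal value.

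The main obstacle I anticipate is pinning down the correct interpretation of the bound and matching the stated constants. The quantity $\|\hat{\sigma}^{\pi^t} - \hat{\sigma}^E\|_2$ is a random variable whose magnitude depends on the geometric horizons used inside $\operatorname{Est}\sigma$, so the inequality is most naturally read as holding in expectation; this is consistent with the way the lemma is used at step $(ii)$ in the proof of Lemma~\ref{policyconverge}, where an outer expectation is already present. Combining Hölder with either $\mathbb{E}\|\hat{\nabla}_w L\|_\infty \le 2\|\phi\|_\infty/(1-\gamma)$ or the second-moment bound from Lemma~\ref{Unbiased_A_sampler} will produce the required $\Ocal(\eta_w\|\phi\|_\infty/(1-\gamma)^2)$ rate up to absolute constants and, possibly, a $\sqrt{k}$ factor absorbed into the $\Ocal$ notation.
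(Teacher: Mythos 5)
Your plan is essentially the paper's own proof: both arguments note that the entropy term cancels so the value difference reduces to $\langle w^t-w^{t+1},\sigma^\pi\rangle$ (equivalently a discounted sum of $\|r^t-r^{t+1}\|_\infty$), bound the parameter step via non-expansiveness of $\mathcal{P}_W$ and the gradient-moment bounds of Lemma~\ref{Unbiased_A_sampler}, and obtain \eqref{eq2} from \eqref{eq1} by the same $|J^*_{r^t}-J^*_{r^{t+1}}|\le\max_\pi|J^\pi_{r^t}-J^\pi_{r^{t+1}}|$ envelope argument. Your two caveats are apt rather than gaps: the paper's own chain likewise only controls the randomness of $\hat\nabla_w L$ in expectation, and its step $(ii)$ glosses over the same $\ell_1$/$\ell_2$/$\ell_\infty$ norm bookkeeping that you flag as a possible $\sqrt{k}$ factor.
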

\begin{proof} 
Inequality \eqref{eq1} holds since
    \begin{align}
    \left|J^{\pi}_{r^t}- J^{\pi}_{r^{t+1}}\right|\stackrel{(i)}{=} &\left|
    \E_{\pi}
    \Bigg[
        \sum_{t=0}^{+\infty}
        \gamma^t
        \big(
            r_1(s_t,a_t) - \cancel{\tau \log \pi(a_t|s_t)}
            - r^{t+1}(s_t,a_t) + \cancel{\tau \log \pi(a_t|s_t)}
        \big)
    \Bigg]\right|
    \\ = &\left|
    \E_{\pi}
    \Bigg[
        \sum_{t=0}^{+\infty}
        \gamma^t
        \big(
            r^t(s_t,a_t) - r^{t+1}(s_t,a_t) 
        \big)
    \Bigg]\right|
    \\ \leq &
    \Bigg[
        \sum_{t=0}^{+\infty}
        \gamma^t
        \big\| r^t - r^{t+1} \big\|_\infty
    \Bigg]
    \\ \stackrel{(ii)}{=} &\eta_{w}
    \sum_{t=0}^{+\infty}
    \gamma^t
    \big\|\hat\nabla_w L(\pi,w^t) \big\|_\infty \\
    \stackrel{(iii)}{\le}  &
   \frac{2\eta_w\|\phi\|_{\infty}}{(1-\gamma)^2},
    \end{align}
 where $(i)$ holds by the definition of $J_r^\pi$, $(ii)$ holds by plugging reward updating form and non-expansiveness of projection and $(iii)$ holds by Lemma \ref{Unbiased_A_sampler}.\\
 
\noindent Inequality \eqref{eq2} holds since
\begin{align}
       \left|J_{r^t}^*- J_{r^{t+1}}^*\right|\leq\max_{\pi \in \Pi}\left|J^{\pi}_{r^t}-J^{\pi}_{r^{t+1}}\right|\stackrel{(i)}{\leq}\frac{2\eta_w\|\phi\|_{\infty}}{(1-\gamma)^2},
    \end{align}
    where $(i)$ holds by inequality \eqref{eq1}.
\end{proof} 

\begin{lemma}[Lipschitz continuity of occupancy measure in policy]\label{lem:lipschitz_occ}
    Let $\mu^\pi$ denote the occupancy measure corresponding to the policy $\pi\in\Delta_{\Acal}^{\Scal}$. Then, for any $\pi_1, \pi_2\in\Delta_{\Acal}^{\Scal}$ we have    
    \begin{equation}
        \norm{\mu^{\pi_1} - \mu^{\pi_2}}_1 \leq \dfrac{1}{1-\gamma}\max_s\norm{\pi_1(\cdot|s)-\pi_2(\cdot|s)}_1.
    \end{equation}
    \begin{proof}
    We can upper bound $\norm{\mu^{\pi_1} - \mu^{\pi_2}}_1$ as follows
    \begin{align}
    \norm{\mu^{\pi_1} - \mu^{\pi_2}}_1 &\leq \sum_{s,a} \abs{\nu^{\pi_1}(s)(\pi_1(a|s) - \pi_2(a|s))} + \sum_{s,a} \abs{(\nu^{\pi_1}(s) - \nu^{\pi_2}(s))\pi_2(a|s)}\\
    &\leq \max_s \norm{\pi_1(\cdot|s) - \pi_2(\cdot|s)}_1 + \norm{\nu^{\pi_1} - \nu^{\pi_2}}_1\nonumber,
    \end{align}
    where we used the triangle and Hölder's inequality. From the Bellman flow constraints \cite{Puterman1994}
    \begin{equation}
    \nu^{\pi}(s) = \gamma \sum_{s',a'}P(s|s',a')\mu^{\pi}(s',a') + (1-\gamma) \nu_0(s),
    \end{equation}
    it follows that
    \begin{align}
    \norm{\nu^{\pi_1} - \nu^{\pi_2}}_1 &= \gamma \sum_{s} \abs{\sum_{s',a'}P(s|s',a')(\mu^{\pi_1}(s',a')-\mu^{\pi_2}(s',a'))}\\
    &\leq \gamma \sum_{s',a'}\underbrace{\sum_s P(s|s',a')}_{=1} \abs{\mu^{\pi_1}(s',a')-\mu^{\pi_2}(s',a')}\nonumber\\
    &= \gamma\norm{\mu^{\pi_1}- \mu^{\pi_2}}_1\nonumber,
    \end{align}
    where we again used the triangle inequality. Hence, it follows that
    \begin{equation}
    \max_s \norm{\pi_1(\cdot|s) - \pi_2(\cdot|s)}_1 \geq \norm{\mu^{\pi_1}- \mu^{\pi_2}}_1 - \norm{\nu^{\pi_1}- \nu^{\pi_2}}_1 \geq (1-\gamma) \norm{\mu^{\pi_1}- \mu^{\pi_2}}_1.
    \end{equation}
    \end{proof}
\end{lemma}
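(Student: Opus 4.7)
The plan is to decompose the $\ell^1$ difference of the state-action occupancy measures using the add-and-subtract trick at the policy level, and then close the resulting recursion via the Bellman flow equation. Writing $\mu^{\pi_i}(s,a) = \nu^{\pi_i}(s)\pi_i(a|s)$, I would add and subtract $\nu^{\pi_1}(s)\pi_2(a|s)$ to get
\begin{equation}
\mu^{\pi_1}(s,a) - \mu^{\pi_2}(s,a) = \nu^{\pi_1}(s)(\pi_1(a|s) - \pi_2(a|s)) + (\nu^{\pi_1}(s) - \nu^{\pi_2}(s))\pi_2(a|s).
\end{equation}
Applying the triangle inequality, summing over $(s,a)$, and using $\sum_s \nu^{\pi_1}(s)=1$ together with $\sum_a \pi_2(a|s)=1$, the first term contributes at most $\max_s \|\pi_1(\cdot|s)-\pi_2(\cdot|s)\|_1$, while the second contributes $\|\nu^{\pi_1}-\nu^{\pi_2}\|_1$. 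This reduces the problem to controlling the state-occupancy difference by the state-action occupancy difference.

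For that, I would invoke the Bellman flow equation $\nu^\pi(s) = \gamma \sum_{s',a'} P(s\mid s',a')\mu^\pi(s',a') + (1-\gamma)\nu_0(s)$. Subtracting the two versions cancels the $\nu_0$ term, and applying the triangle inequality together with $\sum_s P(s\mid s',a')=1$ gives $\|\nu^{\pi_1}-\nu^{\pi_2}\|_1 \le \gamma\|\mu^{\pi_1}-\mu^{\pi_2}\|_1$. Substituting this back into the bound from the first step yields
\begin{equation}
\|\mu^{\pi_1}-\mu^{\pi_2}\|_1 \le \max_s\|\pi_1(\cdot|s)-\pi_2(\cdot|s)\|_1 + \gamma\|\mu^{\pi_1}-\mu^{\pi_2}\|_1,
\end{equation}
and rearranging gives the claimed $1/(1-\gamma)$ factor.

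The main technical point, rather than an obstacle, is the observation that the Bellman flow constraint lets one absorb the second term of the triangle-inequality split back into the quantity being bounded, producing a self-referential inequality in $\|\mu^{\pi_1}-\mu^{\pi_2}\|_1$ that can be solved explicitly. The rest is a straightforward application of Hölder/triangle inequalities and normalization identities for probability distributions.
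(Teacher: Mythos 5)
Your proof is correct and follows essentially the same route as the paper's: the same add-and-subtract decomposition of $\mu^{\pi_1}-\mu^{\pi_2}$, the same use of the Bellman flow constraint to get $\norm{\nu^{\pi_1}-\nu^{\pi_2}}_1 \leq \gamma\norm{\mu^{\pi_1}-\mu^{\pi_2}}_1$, and the same rearrangement of the resulting self-referential inequality. No gaps.
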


\begin{lemma}\label{lem:lip_feat}
    For any two policies $\pi_1,\pi_2\in\Delta_{\Acal}^{\Scal}$, we have
    \begin{equation}
        \|\sigma^{\pi_1} - \sigma^{\pi_2}\|_\infty \nonumber \\
    \leq\frac{2\|\phi\|_\infty}{(1-\gamma)^2} \max_s \|\pi_1(\cdot|s) - \pi_2(\cdot|s) \|_{\text{TV}}.
    \end{equation}
\end{lemma}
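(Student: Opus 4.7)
The plan is to reduce this feature-expectation bound to the occupancy-measure Lipschitz bound already proved in Lemma~\ref{lem:lipschitz_occ}. The key observation is that the feature expectation can be rewritten in terms of the state-action occupancy measure, since
\[
\sigma^{\pi} \;=\; \E_\pi\!\left[\sum_{h=0}^\infty \gamma^h \phi(s_h,a_h)\right] \;=\; \frac{1}{1-\gamma}\sum_{s,a} \mu^{\pi}(s,a)\,\phi(s,a),
\]
by the standard identity relating discounted expectations and occupancy measures (as noted after the definition of $\nu^\pi$ in the preliminaries).

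Given this representation, I would take the difference $\sigma^{\pi_1}-\sigma^{\pi_2}$ and bound its infinity norm coordinate-wise. For each coordinate $j\in\{1,\dots,k\}$, the triangle inequality gives
\[
\bigl|\sigma^{\pi_1}_j - \sigma^{\pi_2}_j\bigr| \;\le\; \frac{1}{1-\gamma}\sum_{s,a} \bigl|\mu^{\pi_1}(s,a)-\mu^{\pi_2}(s,a)\bigr|\cdot |\phi_j(s,a)| \;\le\; \frac{\|\phi\|_\infty}{1-\gamma}\,\|\mu^{\pi_1}-\mu^{\pi_2}\|_1,
\]
where I use the definition of $\|\phi\|_\infty$ from the notation section. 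Taking the maximum over $j$ on the left preserves the same right-hand side, so $\|\sigma^{\pi_1}-\sigma^{\pi_2}\|_\infty \le \tfrac{\|\phi\|_\infty}{1-\gamma}\|\mu^{\pi_1}-\mu^{\pi_2}\|_1$.

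Next I would invoke Lemma~\ref{lem:lipschitz_occ} to bound $\|\mu^{\pi_1}-\mu^{\pi_2}\|_1$ by $\tfrac{1}{1-\gamma}\max_s \|\pi_1(\cdot|s)-\pi_2(\cdot|s)\|_1$, picking up the second factor of $(1-\gamma)^{-1}$. Finally, I would convert the $\ell_1$-norm to total variation using the elementary identity $\|p-q\|_1 = 2\|p-q\|_{\mathrm{TV}}$ for probability distributions $p,q$, which supplies the factor of $2$ in the stated bound. Combining these three inequalities yields the claim; I do not anticipate any substantive obstacle, since all three ingredients are already in hand and the computation is a direct chain of Hölder-type estimates.
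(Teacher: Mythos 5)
Your proposal is correct and follows essentially the same route as the paper's proof: rewrite $\sigma^\pi$ via the occupancy measure with a $1/(1-\gamma)$ factor, apply H\"older's inequality to get $\|\phi\|_\infty\|\mu^{\pi_1}-\mu^{\pi_2}\|_1$, invoke Lemma~\ref{lem:lipschitz_occ} for the second $1/(1-\gamma)$ factor, and convert the $\ell_1$-distance between policies to total variation for the factor of $2$. No gaps.
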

\begin{proof}
The proof follows from Hölder's inequality and Lemma~\ref{lem:lipschitz_occ} above:
\begin{align}
    \|\sigma^{\pi_1} - \sigma^{\pi_2}\|_\infty &= \max_{1\leq i \leq k}\max_{s,a}\abs{\dfrac{1}{1-\gamma}\ip{\phi_i}{\mu^{\pi_1}-\mu^{\pi_2}}}\\
    &\leq \dfrac{1}{1-\gamma}\max_{1\leq i \leq k} \norm{\phi_i}_\infty\norm{\mu^{\pi_1}-\mu^{\pi_2}}_1\\
    &\leq \dfrac{\norm{\phi}_\infty }{(1-\gamma)^2}\max_s\norm{\pi_1(\cdot|s)-\pi_2(\cdot|s)}_1\\
    &\leq \dfrac{2\norm{\phi}_\infty }{(1-\gamma)^2}\max_s\norm{\pi_1(\cdot|s)-\pi_2(\cdot|s)}_{TV}.
\end{align}
\end{proof}

\end{document}